\documentclass{article}

\usepackage[table,svgnames]{xcolor}

\usepackage{microtype}
\usepackage{graphicx}
\usepackage{subcaption}
\usepackage{multirow}
\usepackage{booktabs} % for professional tables
\usepackage{wrapfig}
\usepackage{listings}
\definecolor{dl_green}{rgb}{0.0, 0.6, 0.0} % Comments
\definecolor{dl_purple}{rgb}{0.5, 0.0, 0.5} % Keywords/Functions
\definecolor{dl_blue}{rgb}{0.0, 0.0, 1.0}   % Built-ins
\definecolor{dl_gray}{rgb}{0.97, 0.97, 0.97} % Background
\definecolor{dl_darkred}{rgb}{0.6, 0.0, 0.0} % Operators
\definecolor{myyellow}{HTML}{FFAE42}
\usepackage{enumitem}

\lstdefinelanguage{ModernPython}{
    language=Python,
    basicstyle=\fontsize{7}{9}\ttfamily, % slightly smaller, tighter
    backgroundcolor=\color{dl_gray},
    commentstyle=\color{gray},
    keywordstyle=\color{dl_purple}\bfseries,
    morekeywords={with, no_grad},
    classoffset=1,
    morekeywords={randn, randn_like, sqrt, zeros,size, expand, mse_loss, mean, squeeze, rand, sample, backward, step, zero_grad, push, optimizer},
    keywordstyle=\color{dl_blue},
    classoffset=0,
    classoffset=2,
    morekeywords={model_fn, solver_fn, score_fn},
    keywordstyle=\color{dl_darkred}\bfseries,
    classoffset=0,
    framesep=8pt,
    xleftmargin=8pt,
    frame=none,
    showstringspaces=false,
}

\usepackage[framemethod=TikZ]{mdframed}
\definecolor{myyellowlight}{HTML}{FBF9F5}

\definecolor{lightgray}{HTML}{F2F2F2}
\newmdenv[
    backgroundcolor=myyellowlight,
    roundcorner=3pt,
    skipabove=7pt,
    linewidth=0pt,
    innertopmargin=6pt,
    innerbottommargin=6pt, % Added for symmetry
    innerleftmargin=6pt,   % <--- This fixes the large left inset
    innerrightmargin=6pt   % <--- This fixes the large right inset
]{myframe}
\usepackage{hyperref}

\usepackage{amsmath,amsfonts,bm,mathtools,amssymb}

\def\1{\bm{1}}

\def\gA{{\mathcal{A}}}
\def\gB{{\mathcal{B}}}

\def\gL{{\mathcal{L}}}
\def\gM{{\mathcal{M}}}
\def\gN{{\mathcal{N}}}

\def\sR{{\mathbb{R}}}
\def\sS{{\mathbb{S}}}

\DeclareMathOperator{\E}{\mathbb{E}}

\newcommand{\R}{\mathbb{R}}

\newcommand{\norm}[1]{\left\lVert#1\right\rVert}

\newcommand*\diff{\mathop{}\!\mathrm{d}}

\DeclarePairedDelimiterX{\infdivx}[2]{(}{)}{%
  #1\;\delimsize|\delimsize|\;#2%
}

\newcommand{\parr}[1]{\left (#1\right )}
\newcommand{\brac}[1]{\left [#1\right ]}
\newcommand{\set}[1]{\left\{#1\right\}}
\newcommand{\inner}[2]{\left\langle #1, #2 \right\rangle}
\newcommand{\eg}{{e.g.}}
\newcommand{\ie}{{i.e.}}

\usepackage[most]{tcolorbox}
\newcommand{\yellowboxuncenter}[1]{%
\begin{tcolorbox}[
    colback=myyellow!12,
    colframe=orange!50!black,
    boxrule=0.6pt,
    arc=1pt,
    left=1pt,
    right=1pt,
    top=1pt,
    bottom=1pt
]
#1
\end{tcolorbox}
}

\usepackage[preprint]{icml2026}

\definecolor{RoyalBlue}{rgb}{0.25, 0.41, 0.88}
\newcommand{\cellhi}{\cellcolor{myyellow!15}}

\newcommand{\cellhii}{\cellcolor{myyellow!35}}
\def\gM{{\mathcal{M}}}
\newcommand{\too}{\rightarrow}
\definecolor{cadetblue}{rgb}{0.37, 0.62, 0.63}
\definecolor{steelblue}{RGB}{70,130,180}

\usepackage{amsmath}
\usepackage{amssymb}
\usepackage{mathtools}
\usepackage{amsthm}
\usepackage{thmtools}
\usepackage{thm-restate}

\usepackage[capitalize,noabbrev]{cleveref}

\theoremstyle{plain}
\newtheorem{theorem}{Theorem}[section]

\theoremstyle{definition}

\theoremstyle{remark}

\usepackage[textsize=tiny]{todonotes}

\icmltitlerunning{Flow Sampling}

\begin{document}

\twocolumn[
  \icmltitle{Flow Sampling: Learning to Sample from Unnormalized Densities\\ via Denoising Conditional Processes}

  \icmlsetsymbol{equal}{*}

  \begin{icmlauthorlist}
    \icmlauthor{Aaron Havens}{meta}
    \icmlauthor{Brian Karrer}{meta}
    \icmlauthor{Neta Shaul}{weizmann}
    %\icmlauthor{}{sch}
    %\icmlauthor{}{sch}
  \end{icmlauthorlist}

  \icmlaffiliation{meta}{FAIR at Meta}
  \icmlaffiliation{weizmann}{Weizmann Institute of Science}

  \icmlcorrespondingauthor{Aaron Havens}{havensaaronj@gmail.com}
  \icmlcorrespondingauthor{Neta Shaul}{shaulneta@gmail.com}

  %\icmlkeywords{Machine Learning, ICML}

  \vskip 0.3in
]

\printAffiliationsAndNotice{}  % no special notice (required even if empty)
% Or, if applicable, use the standard equal contribution text:
% \printAffiliationsAndNotice{\icmlEqualContribution}

\begin{abstract}

Sampling from unnormalized densities is analogous to the generative modeling problem, but the target distribution is defined by a known energy function instead of data samples.
Because evaluating the energy function is often costly, a primary challenge is to learn an efficient sampler.
We introduce \emph{Flow Sampling}, a framework built on diffusion models and flow matching for the data-free setting. Our training objective is conditioned on a noise sample and regresses onto a \emph{denoising} diffusion drift constructed from the energy function. In contrast, diffusion models' objective is conditioned on a data sample and regresses onto a \emph{noising} diffusion drift. We utilize the interpolant process to minimize the number of energy function evaluations during training, resulting in an efficient and scalable method for sampling unnormalized densities.
Furthermore, our formulation naturally extends to Riemannian manifolds, enabling diffusion-based sampling in geometries beyond Euclidean space. We derive a closed-form formula for the conditional drift on constant curvature manifolds, including hyperspheres and hyperbolic spaces.
We evaluate Flow Sampling on synthetic energy benchmarks, small peptides, large-scale amortized molecular conformer generation, and distributions supported on the sphere, demonstrating strong empirical performance.
\vspace{-10pt}
\end{abstract}
\section{Introduction}
\begin{figure*}[htbp]
    \begin{minipage}{\linewidth}
    \centering
    \includegraphics[width=0.85\linewidth]{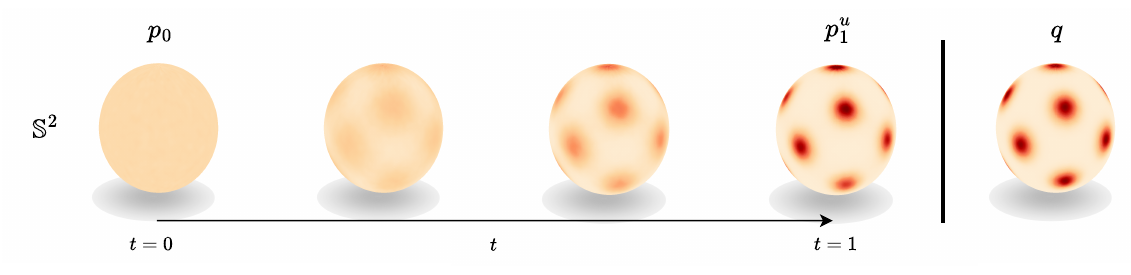}
    \end{minipage}
    % \vspace{-10pt}
    \caption{Flow Sampling is the first diffusion sampler able to perform sampling from an unnormalized density on Riemannian manifolds. The figure depicts a learned diffusion process using iterative Flow Sampling from a mixture of spherical von Mises--Fisher distributions.}
    \label{fig:vMF_vis}
    \vspace{-10pt}
\end{figure*}

Many problems in the computational sciences require sampling from high-dimensional probability distributions that are specified only up to a normalization constant~\citep{dfrenkel96:mc, noe2019boltzmann, barroso_omat24},
\begin{equation}\label{eq:target_distribution}
    q(x) =  \frac{\exp(r(x))}{Z},\quad Z = \int_{\R^d}\exp(r(x)) \diff{x} < \infty.
\end{equation}
The target distribution is $q$, and it is defined by the unnormalized log density $r(x)$, which is referred to as the \emph{reward function} in reinforcement learning, or the \emph{negative energy} in physical the sciences, computational chemistry and biology. One can typically evaluate $r(x)$ and its gradient $\nabla r(x)$, often at high computational cost, but does not have access to samples from $q$.

Markov chain Monte Carlo (MCMC) methods~\citep{hastings1970monte, neal2001annealed}, including Langevin dynamics~\citep{roberts1996exponential, roberts1998optimal}, converge asymptotically to the target distribution but often mix slowly and generate samples sequentially.
This limits their applicability in settings such as materials design~\cite{barroso_omat24} that require fast, reusable samplers across many target instances. This motivates the development of \emph{amortized} sampling methods that replace long sequential simulations with learned sampling dynamics.

% unified the to paragraph to a single one
In this work, we introduce \emph{Flow Sampling}, a diffusion-based framework for learning amortized samplers directly from unnormalized density functions.
Flow Sampling learns to sample the target distribution using a conditional denoising process constructed from the energy function, such that its marginal distribution matches the target by design.
It utilizes the known interpolant process and a detached state of the model to obtain reusable gradients of the negative energy which reduces considerably the required number of energy function evaluation.
In contrast, existing diffusion-based methods~\citep{phillips2024particle, de2024target, akhound2024iterated} learn to sample the target distribution through Monte Carlo corrections, such as importance weighting, resampling, or auxiliary MCMC. While principled, these approaches rely on stochastic estimation and repeated energy function evaluations, which can significantly increase computational cost.

More recently, diffusion samplers based on stochastic optimal control or Schr\"odinger bridge problems haved emerged. They learn the diffusion dynamics by optimizing divergences over diffusion path measures~\citep{zhang2022pis, AS, liu2025asbs}. Although theoretically well grounded, this perspective requires characterizing optimal controls or bridges, leading to complex training schemes and auxiliary networks. By building on the diffusion models~\cite{ho2020denoising, song2021score} and flow matching~\cite{lipman2023flow,liu2023flow, albergo2023stochastic} formulation, we alleviate these restrictions, resulting in a flexible design space and an easy to implement method. Despite the non-optimal control or coupling, in practice Flow Sampling cuts training cost by at least half on the large-scale amortized molecular conformer generation benchmarks~\citep{AS}. Moreover, the Flow Sampling naturally extends to Riemannian manifolds. Similar to Riemannian flow matching~\citep{chen2023riemannian}, we replace the affine interpolant with a geodesic interpolant, and furthermore we derive a closed-form formula for the conditional drift on constant curvature manifolds, which includes the hypersphere and hyperbolic spaces.

\textbf{Our main contributions:}
\begin{enumerate} %[itemsep=5pt]
    \item We introduce Flow Sampling, a principled framework for learning
    diffusion samplers from unnormalized densities by matching conditional
    denoising diffusion drifts.

    \item We derive a simple closed-form regression target that reuses
    endpoint energy gradients along the interpolant process, leading to an
    efficient replay-buffer training algorithm summarized in
    Algorithm~\ref{alg:fs_train}.

    \item We extend diffusion-based samplers to geometries beyond Euclidean spaces. In particular, we derive a closed-form formula for the conditional denoising diffusion drift on constant curvature Riemannian manifolds, including the hypersphere and the hyperbolic spaces.

    \item We demonstrate state-of-the-art performance on synthetic energy
    benchmarks and amortized molecular conformer generation, while reducing
    simulation training cost by $4$--$8$ times.
\end{enumerate}

\begin{figure*}
    \centering
    \begin{tabular}{ccc}
            Flow Matching conditional &  Flow Sampling conditional (ours) &  Marginal \\
            \includegraphics[
                trim=0 50 0 40,  % left, bottom, right, top
                clip,
                width=0.31\textwidth
            ]{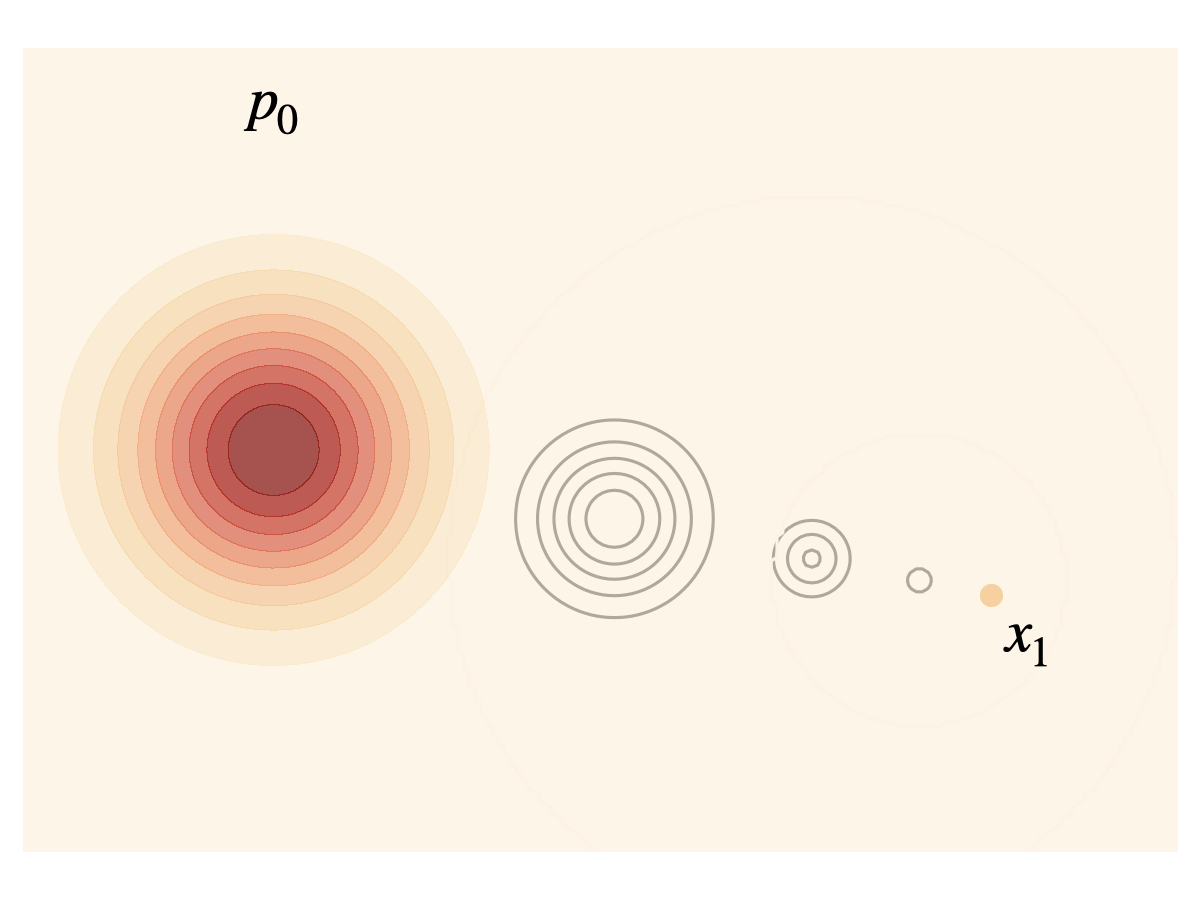} & \includegraphics[
                trim=0 50 0 40,  % left, bottom, right, top
                clip,
                width=0.31\textwidth
            ]{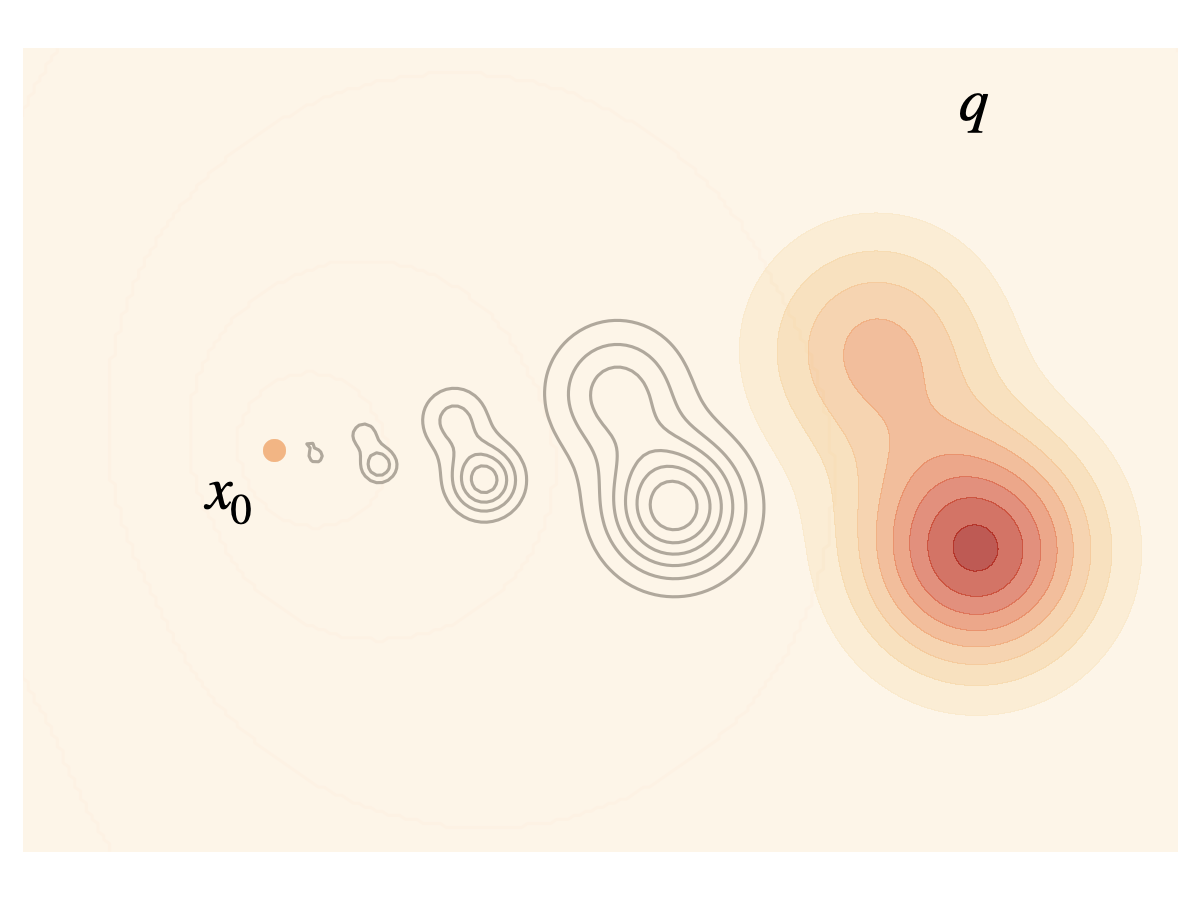}  &
            \includegraphics[
                trim=0 50 0 40,  % left, bottom, right, top
                clip,
                width=0.31\textwidth
            ]{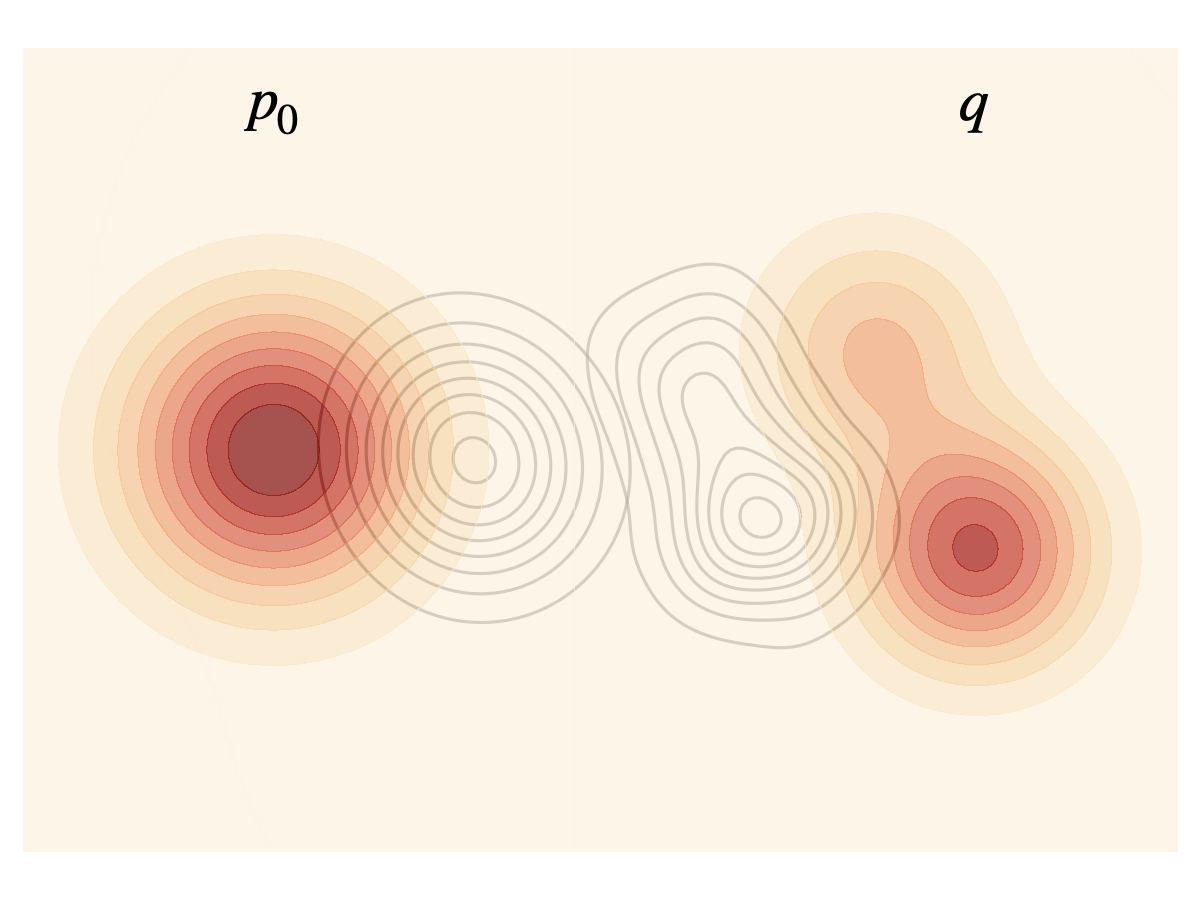}
    \end{tabular}
    \vspace{-5pt}
    \caption{
    (left) Flow matching conditional probability path is marginal of a noising process conditioned on a data point $x_1$. (middle) Flow sampling conditional probability path is a marginal of a denoising process conditioned on a noise point $x_0$. (right) The probability of the marginal process is the same in both cases.
    }
    \label{fig:fs_path_vs_fm_path}
    \vspace{-10pt}
\end{figure*}

\section{Preliminaries: Flow Matching}
A flow process $\parr{X_t}_{0\le t\le 1}$ is defined by a \emph{velocity field} $v:\sR^{d}\times[0,1]\too\sR^d$ and a boundary condition,
\begin{equation}\label{e:flow_process}
    \diff{X_t} = v_t\parr{X_t}\diff{t},\quad X_0 \sim p_0,
\end{equation}
where the \emph{source distribution} $p_0$ is the marginal of the process at time $t=0$. The \emph{probability path} $p_t$ is the marginal of the process at time $0 \le t \le 1$, and it is given by the \emph{continuity equation},
\begin{equation}\label{eq:continuity}
    \frac{\partial}{\partial t} p_t(x) + \nabla\brac{p_t(x)v_t(x)} = 0.
\end{equation}

Given a dataset of i.i.d. samples from a target distribution $q$ on $\R^d$, and an easy to sample source distribution $p_0$, the goal of flow matching (FM)~\citep{lipman2023flow,liu2023flow, albergo2023si} is to learn a velocity field $v^{\theta}_t$ such that the marginal of $X_t$ interpolate between the source and the target, \ie,
\begin{equation}
    X_0\sim p_0,\quad X_1\sim p_1\equiv q.
\end{equation}

To learn $v^{\theta}_t$, FM uses a supervising flow process conditioned at time $t=1$. Given $X_1=x_1$, the conditional velocity is,
\begin{equation}\label{eq:v_cond_fm}
    v_{t|1}(x|x_1) = \frac{\dot{\sigma}_t}{\sigma_t}\parr{x - \alpha_tx_1} + \dot{\alpha}_tx_1.
\end{equation}
The ODE~(\ref{e:flow_process}) for the conditional flow~(\ref{eq:v_cond_fm}) is solved by,
\begin{equation}\label{eq:x_cond_fm}
    X_{t} = \sigma_tX_0 + \alpha_t x_1,\quad X_0 \sim p_0.
\end{equation}
The marginal of the \emph{interpolant}~(\ref{eq:x_cond_fm}) $X_t$ is the \emph{conditional probability path} that is given by the push forward formula
\begin{equation}\label{eq:p_cond_fm}
    p_{t|1}(x|x_1) = \frac{1}{\sigma_t^d}p_0\parr{\frac{x -\alpha_t x_1}{\sigma_t}}.
\end{equation}
Finally, the training objective for the velocity is the FM loss
\begin{equation}\label{eq:loss_fm}
    \gL_{\text{FM}}\parr{\theta} = \E\brac{\norm{v_t^{\theta}\parr{X_{t}} - v_{t|1}\parr{X_{t}|X_1}}^2}, % \Big|\ X_1
\end{equation}
where $X_1\sim q$ sample is obtained from the dataset, and $X_{t}\sim p_{t|1}$ is sampled using the interpolant~(\ref{eq:x_cond_fm}).

Indeed, the generated probability path by the optimal $v^{\theta}_t$ is
\begin{equation}
    p_t(x) = \E\brac{p_{t|1}(x|X_1)},
\end{equation}
that equals the source, $p_0$ and target
$q$ at $t=0$ and $t=1$.

\vspace{-8pt}
\section{Flow Sampling}
In contrast to flow matching, we are interested in the data free settings, but our goal is similar. Given a score function of the target $\nabla\log q(x_1) = \nabla r(x_1)$, and an easy to sample source distribution $p_0$, our goal is to learn a diffusion process $\parr{X_t}_{0\le t\le 1}$ such that its probability path interpolates between source $p_0$ and target $p_1\equiv q$.

\vspace{-5pt}
\paragraph{Diffusion process} A diffusion process is defined by a drift $u:\R^d\times[0,1]\too\R^d$, a diffusion coefficient $g_t:[0,1]\rightarrow\R_{\ge0}$, and a boundary condition,
\begin{equation}\label{e:diffusion_process}
    \diff X_t = u_t\parr{X_t}\diff t + g_t  \diff B_t,\quad \ X_0\sim p_0.
\end{equation}
where $B_t$ is the Brownian motion. The probability path $p_t$ of a diffusion process is given by the Fokker–Planck equation,
\begin{equation}\label{eq:fokker_planck}
    \frac{\partial}{\partial t} p_t(x) + \nabla\brac{p_t(x)u_t(x)} = \frac{g_t^2}{2}\nabla^2p_t(x).
\end{equation}
We recall the standard correspondence between deterministic transport and a diffusion with the same probability path.
\begin{restatable}{proposition}{FlowToDiffusion}
\label{prop:flow_to_diffusion}
% \begin{proposition}\label{prop:flow_to_diffusion}
    Let $v_t$ be a velocity that defines the flow process~(\ref{e:flow_process}) with a marginal $p_t$. Then the drift
    \begin{equation}\label{eq:v_to_u}
        u_t(x) = v_t(x) + \frac{g_t^2}{2}\nabla\,\log p_t(x),
    \end{equation}
    defines a diffusion process~(\ref{e:diffusion_process}) with the same marginal $p_t$.
% \end{proposition}
\end{restatable}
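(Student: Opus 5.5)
The plan is to verify directly that $p_t$ satisfies the Fokker--Planck equation~(\ref{eq:fokker_planck}) with drift $u_t$ from~(\ref{eq:v_to_u}), and that the two processes share the boundary condition $X_0\sim p_0$. Then we appeal to uniqueness of solutions of the Fokker--Planck equation for the given initial condition. We identify the marginal of~(\ref{e:diffusion_process}) with this solution.

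\textbf{Step 1: rewrite the diffusion term.} Using $\nabla p_t = p_t\nabla\log p_t$ (valid where $p_t>0$), we write the Laplacian term as a divergence:
\begin{equation*}
    \frac{g_t^2}{2}\nabla^2 p_t(x) = \nabla\brac{p_t(x)\,\frac{g_t^2}{2}\nabla\log p_t(x)}.
\end{equation*}

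\textbf{Step 2: substitute and reduce to the continuity equation.} Plugging $u_t = v_t + \frac{g_t^2}{2}\nabla\log p_t$ into the left-hand side of~(\ref{eq:fokker_planck}) gives
\begin{equation*}
    \frac{\partial}{\partial t}p_t + \nabla\brac{p_t v_t} + \nabla\brac{p_t\,\frac{g_t^2}{2}\nabla\log p_t}.
\end{equation*}
By Step 1, the last term equals the right-hand side $\frac{g_t^2}{2}\nabla^2p_t$. The Fokker--Planck equation therefore holds if and only if $\frac{\partial}{\partial t}p_t + \nabla\brac{p_tv_t}=0$. This is exactly the continuity equation~(\ref{eq:continuity}), which $p_t$ satisfies by assumption because it is the marginal of the flow driven by $v_t$.

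\textbf{Step 3: conclude equality of marginals.} Both processes start from $p_0$. The marginal of the diffusion with drift $u_t$ solves the same Fokker--Planck initial value problem that $p_t$ solves, so the two marginals coincide.

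The main obstacle is this last step. Concluding that the diffusion's marginal is $p_t$ requires uniqueness for the linear Fokker--Planck equation. It also requires the score $\nabla\log p_t$ to be well defined, which means $p_t>0$ and $p_t$ smooth enough. I would handle this by imposing standard regularity assumptions: $p_t$ smooth and positive, and $u_t$ locally Lipschitz with linear growth, so that the SDE has a unique strong solution. Under these assumptions, uniqueness of weak solutions of the Fokker--Planck equation in the class of probability densities follows from standard results. I would state these as the implicit assumptions of the proposition, as is customary in the flow matching literature, rather than prove them. The algebraic verification in Steps 1--2 is the substantive content.
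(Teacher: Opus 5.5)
Your proposal is correct and matches the paper's proof: substitute $u_t$ into the Fokker--Planck equation, use $p_t\nabla\log p_t=\nabla p_t$ to turn the score term into the Laplacian, and reduce to the continuity equation satisfied by $p_t$. Your Step 3, which invokes uniqueness of the Fokker--Planck initial value problem under stated regularity assumptions, makes explicit a step the paper leaves implicit when it concludes that $u_t$ ``generates'' $p_t$ once the equation holds.
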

Proposition \ref{prop:flow_to_diffusion} is a known result~\citep{ANDERSON1982313}; we defer the proof to the appendix. This observation makes explicit how score information enters the drift, which is exactly the mechanism we will exploit to inject target information.

\paragraph{Key Idea} Given a data sample $X_1\sim q$, the conditional flow~(\ref{eq:v_cond_fm}) which is the supervision for the FM loss~(\ref{eq:loss_fm}), is independent of the target distribution $q$. Thus, all learning signals for the target $q$ come through marginalization.

In data-free settings, we don't have access to data samples $X_1\sim q$, hence we cannot marginalize over the target $q$. Our key observation is that we can condition on samples from the source $X_0\sim p_0$, move all the signal about the target $q$ to a supervising denoising diffusion process conditioned at time $t=0$ by $X_0$, and marginalize over the source $p_0$.

We use a conditional probability path defined as a push forward of the target distribution
\begin{equation}\label{eq:p_cond_fs}
    p_{t|0}\parr{x|x_0} = \frac{1}{\alpha_t^d}q\parr{\frac{x -\sigma_t x_0}{\alpha_t}}.
\end{equation}
This is in contrast to flow matching and diffusion models~\cite{ho2020denoising} that pushes the source distribution~(\ref{eq:p_cond_fm}). Still, in both cases the marginal probability path is the same,% for all $t\in [0,1]$,
\begin{equation}
    \E\brac{p_{t|0}\parr{x|X_0}} = p_t(x) = \E\brac{p_{t|1}\parr{x|X_1}}.
\end{equation}
The difference between pushing forward the source $p_0$ or the target $q$ is  illustrated in Figure \ref{fig:fs_path_vs_fm_path}. Notably, the former is independent of the target, and the latter is a scale and shift of the target.

\textbf{Supervising drift} \ \ Importantly, we identify two processes that generate the conditional path in \eqref{eq:p_cond_fs}. The first, is defined by a conditional velocity similar to FM (\ref{eq:v_cond_fm}) but conditioned on time $t=0$. That is,
\begin{equation}\label{eq:v_cond_fs}
    v_{t|0}(x|x_0) = \frac{\dot{\alpha}_t}{\alpha_t}\parr{x - \sigma_tx_0} + \dot{\sigma}_t x_0,
\end{equation}
which generates the probability path $p_{t|0}$~(\ref{eq:p_cond_fs}) since they satisfy the continuity equation (\ref{eq:continuity}). The second, is defined by the conditional drift
\begin{equation}\label{eq:cond_drift}
    u_{t|0}(x|x_0) := v_{t|0}(x|x_0) + \frac{g_t^2}{2}\,\nabla\log p_{t|0}\parr{x|x_0},
\end{equation}
which by Proposition~(\ref{prop:flow_to_diffusion}), generates the same conditional probability path $p_{t|0}$~(\ref{eq:p_cond_fs}).

The two processes are important. The second process, \ie, the drift~(\ref{eq:cond_drift}) allows to sample $X_1\sim q$ given $X_0=x_0$ using simulation, and regressing onto the drift $u_{t|0}$ suffices to learn a sampler of $q$. The first process, \ie. the velocity~(\ref{eq:v_cond_fs}) is solved in closed form by the interpolant~(\ref{eq:x_cond_fm}) and allows efficient reuse of existing samples $X_1\sim q$ with simulation free sampling of $p_{t|0}$, significantly reducing training cost.

\paragraph{Training objective} Similar to FM, we use the marginalization trick~\citep{lipman2024guide} and train our drift $u^{\theta}$ by regressing onto the conditional drift $u_{t|0}$ (\ref{eq:cond_drift}). This yields a \emph{diffusion matching} (DM) objective
\begin{equation}\label{eq:diffusion_matching}
    \gL_{\text{DM}}\parr{\theta} = \E\brac{\norm{u^{\theta}(X_{t}) - u_{t|0}\parr{X_{t}|X_0}}^2},
\end{equation}
where $X_0\sim p_0$ and $X_{t}\sim p_{t|0}$. The conditional $p_{t|0}$ is sampled using the flow process~(\ref{e:flow_process}) defined by $v_{t|0}$~(\ref{eq:v_cond_fs}) which can be done efficiently since it is solved by the interpolant,

\begin{equation}\label{eq:x_cond_fs}
    X_{t} = \sigma_t X_0 + \alpha_t X_1, \quad X_0 \sim p_0,\ X_1 \sim q.
\end{equation}

\begin{myframe}
\begin{restatable}{proposition}{Drift}
\label{prop:drift}
Let $X_0=x_0$, and $v_{t|0}$ the conditional velocity~(\ref{eq:v_cond_fs}) that generates $p_{t|0}$ the conditional probability path~(\ref{eq:p_cond_fs}). Then, for every $g_t \geq 0$ and $X_1\sim q$,
\begin{align}\label{eq:cond_drift_path}
    u_{t|0}\parr{X_{t}|x_0} = \dot{\alpha}_tX_1 + \dot{\sigma}_tx _0 + \frac{g_t^2}{2 \alpha_t}\nabla r\parr{X_1}.
\end{align}
where $X_t$ is the interpolant~(\ref{eq:x_cond_fs}) defined by $v_{t|0}$.
\end{restatable}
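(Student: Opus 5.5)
The identity is a direct substitution into the definition~(\ref{eq:cond_drift}) of $u_{t|0}$, evaluated along the interpolant~(\ref{eq:x_cond_fs}). The drift has two pieces: the conditional velocity $v_{t|0}$ from~(\ref{eq:v_cond_fs}) and the conditional score $\nabla\log p_{t|0}$ from~(\ref{eq:p_cond_fs}). I will compute each piece at the point $x = X_t = \sigma_t x_0 + \alpha_t X_1$ and then add them.

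\textbf{Velocity term.} Here $x - \sigma_t x_0 = \alpha_t X_1$, so
\begin{equation*}
v_{t|0}(X_t|x_0) = \frac{\dot\alpha_t}{\alpha_t}\,\alpha_t X_1 + \dot\sigma_t x_0 = \dot\alpha_t X_1 + \dot\sigma_t x_0 .
\end{equation*}
This uses $\alpha_t>0$, which I assume for $t\in(0,1]$, so that $p_{t|0}$ is well defined.

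\textbf{Score term.} From~(\ref{eq:p_cond_fs}),
\begin{equation*}
\log p_{t|0}(x|x_0) = -d\log\alpha_t + \log q\!\left(\frac{x-\sigma_t x_0}{\alpha_t}\right).
\end{equation*}
The first term does not depend on $x$. By~(\ref{eq:target_distribution}), $\log q = r - \log Z$, and $\log Z$ is also constant in $x$. The chain rule therefore gives
\begin{equation*}
\nabla_x \log p_{t|0}(x|x_0) = \frac{1}{\alpha_t}\,\nabla r\!\left(\frac{x-\sigma_t x_0}{\alpha_t}\right).
\end{equation*}
At $x=X_t$ the argument of $\nabla r$ is exactly $X_1$. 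Hence the score term equals $\frac{g_t^2}{2\alpha_t}\nabla r(X_1)$.

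Adding the two pieces yields~(\ref{eq:cond_drift_path}) for every $g_t\ge 0$.

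The only point needing care is interpretive rather than computational. One must note that $X_t$ obtained from the interpolant with $X_1\sim q$ is indeed distributed as $p_{t|0}(\cdot|x_0)$. This holds by the push-forward form~(\ref{eq:p_cond_fs}), equivalently because $v_{t|0}$ generates $p_{t|0}$. It is what makes the identity meaningful inside $\gL_{\text{DM}}$.

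The map $x_1\mapsto \sigma_t x_0+\alpha_t x_1$ is a bijection for $\alpha_t\neq 0$. So the equation holds pointwise, for every realization of $X_1$, not merely in expectation, and no marginalization argument is needed. The endpoint $t=0$, where $\alpha_0=0$, is excluded or understood as a limit.
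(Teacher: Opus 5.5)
Your proof is correct and follows essentially the same route as the paper: split $u_{t|0}$ into the velocity and score terms, apply the chain rule to the push-forward density so the argument of $\nabla r$ collapses to $X_1$, and add. The only cosmetic difference is that the paper obtains the velocity term as $\frac{d}{dt}X_t$, since the interpolant solves the flow of $v_{t|0}$, whereas you substitute $X_t$ directly into the closed form~(\ref{eq:v_cond_fs}); both give $\dot{\alpha}_tX_1+\dot{\sigma}_tx_0$.
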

\end{myframe}

Proof of Proposition~\ref{prop:drift} is in Appendix~\ref{a:proofs}. This proposition is extremely useful, since it implies that for every sample from the target $X_1\sim q$, the score $\nabla\,\log q(X_1)=\nabla\,r(X_1)$ needs to be evaluated only once, to evaluate $u_{t|0}$, the conditional drift~(\ref{eq:cond_drift_path}), along $X_t$ the interpolant~(\ref{eq:x_cond_fs}) for all $t\in[0,1]$ and $X_0\sim p_0$. Hence, we can use a \emph{replay buffer} $\gB$ to store the pairs $X_1, \nabla\,r\parr{X_1}$ and reuse them during training. This improves training efficiency significantly in cases where the reward is costly to compute.

\paragraph{Linear process}
We train our model using the known linear scheduler~\citep{lipman2023flow, liu2022rectified},
\begin{equation}\label{eq:linear_process}
    \alpha_t = t,\quad \sigma_t =1-t, \quad g_t^2 = 2 \gamma t, \quad \gamma >0.
\end{equation}
The squared diffusion coefficient is chosen $g^2_t\propto \alpha_t$ to account for the possible singularity at time $t=0$ as implied by Proposition~(\ref{prop:drift}). This yields a stable target for the regression, and  $\gamma$ is a hyper-parameter that is used to further regularize the target during training.

\subsection{Diffusion matching with the Flow Sampling loss}
Sampling the target $X_1\sim q$ using simulation with the conditional drift~(\ref{eq:cond_drift}) requires repeated evaluations of the energy gradient $\nabla r(x)$, which we want to avoid.

\paragraph{Fixed point iteration}
To overcome this, we propose a fixed point iteration training algorithm. We defined $X_1^{\bar{\theta}}$ to be samples from the detached current state of the model, \ie, $\bar{\theta}=\text{stopgrad}(\theta)$, using the Euler--Maruyama algorithm,
\begin{equation}
    X_{t+h}^{\bar{\theta}} = X_t^{\bar{\theta}} +hu_t^{\bar{\theta}}\parr{X_t^{\bar{\theta}}} + \sqrt{2\gamma t h}Z_t,
\end{equation}
where $Z_t\sim \gN\parr{0,I}$. Then, by minimizing the DM loss~\eqref{eq:diffusion_matching}, but sampling $X_1^{\bar{\theta}}\sim p_1^{\bar{\theta}}$ instead of $X_1\sim q$ during training we obtain our flow sampling objective.
% \vspace{-5pt}
\begin{algorithm}[H]
\caption{Flow sampling training}
\label{alg:fs_train}
\begin{lstlisting}[language=ModernPython]
# model_fn - trainable drift model
# solver_fn - Euler--Maruyama solver
# score_fn - gradient of the reward
# buffer - replay buffer
# gam - diffusion coefficient
# bz, shape - batch size and data shape
for _ in range(outer_loops):
    # exploration phase
    x_0 = randn(bz, *shape)
    with no_grad():
        x_1 = solver_fn(model_fn, gam, x_0)
        score_1 = score_fn(x_1)
    buffer.push(x_1, score_1)

    # optimization phase
    for _ in range(inner_loops):
        x_1, score_1 = buffer.sample(bz)
        x_0 = randn(bz, *shape)
        t = rand(bz, *[1]*len(shape))

        x_t = (1-t)*x_0 + t*x_1
        u_t = x_1 - x_0 + gam*score_1
        pred = model_fn(x_t, t)
        loss = mse_loss(pred, u_t)

        loss.backward()
        optimizer.step()
        optimizer.zero_grad()
\end{lstlisting}
\end{algorithm}

\yellowboxuncenter{
\paragraph{Flow Sampling Objective}
We train $u^\theta_t$ by regressing onto a closed-form target~(\ref{eq:cond_drift_path}), and linear process~(\ref{eq:linear_process}). The target distribution is approximated with samples of the current detached state of the model $X_1^{\bar{\theta}}\sim p_1^{\bar{\theta}}$.
\begin{equation}\label{eq:loss_fs}
    \gL_{\text{FS}}\parr{\theta} = \E\brac{\norm{u_t^{\theta}(X_{t}) - \parr{X_1^{\bar{\theta}} - X_0 + \gamma\nabla r\large(X_1^{\bar{\theta}}\large)}}^2},
\end{equation}
where $X_t$ is the linear interpolant process
\begin{equation}
    X_{t} = (1-t)X_0 + t X_1^{\bar{\theta}},\,\, X_0 \sim p_0,\,\, X_1^{\bar{\theta}}\sim p_1^{\bar{\theta}}.
\end{equation}
}
We use the same alternating scheme as Adjoint Sampling~\cite{AS}, which consists of two phases:
\begin{enumerate}[leftmargin=*, nosep]
    \item \textbf{Exploration phase:} Using the current detached model $u^{\bar{\theta}}$, we draw a batch of samples $X_1^{\bar{\theta}}\sim p_1^{\bar{\theta}}$ with the Euler--Maruyama solver, evaluate the score $\nabla r\parr{X_1^{\bar{\theta}}}$, and store the pairs in the replay buffer $\gB \gets \gB\cup\set{ X_1^{\bar{\theta}},\ \nabla r\parr{X_1^{\bar{\theta}}} }$.

    \item \textbf{Optimization phase:} Fixing the state of the replay buffer $\gB$, we iterate over it $\set{ X_1^{\bar{\theta}},\ \nabla r\parr{X_1^{\bar{\theta}}} }\sim\gB$, and perform multiple gradient steps on $\gL_{\mathrm{FS}}\parr{\theta}$~(\ref{eq:loss_fs}).
\end{enumerate}
\newpage
The two phases are repeated until convergence. The pseudo code for our training method is given in Algorithm~\ref{alg:fs_train}, where a Gaussian source $p_0=\gN\parr{0,I}$ is assumed. The number of energy gradient $\nabla r(x)$ evaluations equals the number of $X_1^{\bar{\theta}}\sim p_1^{\bar{\theta}}$  acquired
during training~\citep{AS}.

\section{Extension to Riemannian Manifolds}

A strong advantage of our flow sampling formulation for training diffusion samplers~\citep{akhoundsadegh2024idem, vargas2023dds, AS} is that it naturally extends diffusion samplers to non-Euclidean geometries.

This section describes how to adapt flow sampling to Riemannian Manifolds. Observing our FS loss~(\ref{eq:loss_fs}) we identify three key requirements: (i) simulation of  diffusion process on manifolds~\citep{huang2022riemannian} for sampling, (ii) an interpolant process~(\ref{eq:x_cond_fs}) on manifolds~\citep{chen2023riemannian}, and (iii) a conditional drift~(\ref{eq:cond_drift_path}) on manifolds that generate the same marginal as the interpolant.

\textbf{Setup} \ \
Let an ambient space $\gA=\R^{d+1}$ equipped with a metric $\inner{\cdot}{\cdot}_{\Sigma}$ given by  $\Sigma\in \R^{(d+1)\times (d+1)}$ an invertible symmetric matrix. We assume a complete, connected, smooth Riemannian manifold with a constant curvature $\kappa$ given by
\begin{equation}\label{eq:manifold}
    \gM = \set{x\in\gA\ |\ \inner{x}{x}_{\Sigma}=\frac{1}{\kappa}}
\end{equation}
Note, in case $\gM$ has more than one connected component, we treat each one separately. The tangent space $T_x\gM$ at a point $x\in\gM$ is defined as
\begin{equation}\label{eq:tangent_space}
    T_x\gM = \set{w \in \gA\ |\ \inner{w}{x}_{\Sigma}=0 }.
\end{equation}

\vspace{-5pt}
\textbf{Diffusion on manifolds} \ \
As in the Euclidean case, a diffusion process is defined by a drift $u_t(x)\in T_x\gM$, and a boundary condition,
\begin{equation}\label{eq:manifold_sde_strat}
\mathrm{d}X_t = u_t(X_t)\,\mathrm{d}t + \sqrt{2\gamma t} P^{\perp}_{X_t}\circ \mathrm{d}B_t,\quad X_0 \sim p_0.
\end{equation}
where, $B_t$ is the standard Brownian motion in the ambient space $\gA$, the diffusion coefficient is $g_t := \sqrt{2\gamma t}$, and $P_{x}^{\perp}$ is an orthogonal projection defined for any $x, w\in \gA$ as
\begin{equation}\label{eq:orthogonal_projection}
    P_{x}w = \frac{\inner{x}{w}_{\Sigma}}{\inner{x}{x}_{\Sigma}}x,\quad P^{\perp}_{x}w = w - P_{x}w.
\end{equation}
Thus, $P^{\perp}_{X_t}\circ \mathrm{d}B_t \in T_{X_t}\gM$. The marginal of the process $p_t$ is defined by the Riemannian Fokker-Planck equation
\begin{equation}\label{eq:fokker_plank_riem}
\partial_t p_t = -\mathrm{div}_{\gM}(p_t u_t) + \gamma t \Delta_{\gM}p_t,
\end{equation}
where $\mathrm{div}_{\gM}$ and $\Delta_{\gM}$ are the divergence and Laplacian operators (resp.) generalized to manifolds.

Lastly, the Euler--Maruyama solver on the manifold is
\begin{equation}\label{eq:euler_maruyama_riem}
X_{t+h} = \exp_{X_t}\parr{h u_t(X_t) + \sqrt{2\gamma th} P^{\perp}_{X_t} Z_t},
\end{equation}
where $Z_t\sim\gN\parr{0, I}$, and $\exp_{x}:T_x\gM\too\gM$ is the exponential map of $\gM$. For our set of constant curvature manifolds~(\ref{eq:manifold}),
%defined by the metric $\Sigma$ and sectional curvature $\kappa$,
the exponential map can be solved in closed form~\cite{chen2023riemannian} that is dependent on $\Sigma$ and $\kappa$.

% \paragraph{Geodesic Interpolant}
\textbf{Geodesic Interpolant} \ \
Given $X_0=x_0\in\gM$ and $X_1\sim q$, similar to Riemannian flow matching (RFM)~\citep{chen2023riemannian}, instead of an interpolant~(\ref{eq:x_cond_fs}) between $X_0$ and $X_1$, we use the geodesic,
\begin{equation}\label{eq:x_cond_fs_riem}
    X_t = \phi_t\parr{X_1|x_0}= \exp_{X_1}\brac{(1-t)\log_{X_1}(x_0)},
\end{equation}
where $\log_{x}:\gM\too T_x\gM$ is the logarithmic map which can be solved in closed form for the set of manifolds we assumed. The marginal of the geodesic interpolant is the conditional probability path,
\begin{equation}\label{eq:p_cond_fs_riem}
    p_{t|0}(x|x_0) = q(\phi_{t|0}^{-1}(x|x_0))\,\big|\det\nolimits_{\gM} D\phi_{t|0}^{-1}(x|x_0) \big|.
\end{equation}
% \paragraph{Conditional drift}
\textbf{Conditional drift} \ \
Similar to the Euclidean case, we find a conditional drift $u_{t|0}$ that generate the conditional path~(\ref{eq:p_cond_fs_riem}) $p_{t|0}$. Evaluated at the geodesic interpolant~(\ref{eq:x_cond_fs_riem}), the drift is
\begin{equation}\label{eq:u_cond_riem}
    u_{t|0}(X_t|x_0) = \dot X_t + \gamma t \nabla_{\gM}\log p_t(X_t|x_0),
\end{equation}
where $\dot X_t =\frac{d}{dt}\phi_t\parr{X_1|x_0}$ is the derivative w.r.t. time of the geodesic, and the conditional score decomposes to
\begin{align}\label{eq:score_cond_riem}
\nabla_{\gM}\log p_{t|0}(X_t|x_0) =& (J_t^{-1})^*\,\nabla_{\gM} r(X_1)\\
 &- \nabla_{\gM}\log |\det\nolimits_{\gM} \parr{J_t}|,
\end{align}
% where $J_t:T_{X_1}\gM\too T_{X_t}\gM$,
where $J_t = D\phi_t(X_1|x_0)$ is the Jacobian of the geodesic, and $(J_t^{-1})^*$ is its inverse adjoint.
\begin{myframe}
\begin{restatable}{proposition}{JClosedFormProp}
\label{prop:J_closed_form}
% \begin{proposition}\label{prop:J_closed_form}
    For Riemannian manifold $\gM$ defined in \eqref{eq:manifold} with a metric $\Sigma$ and constant curvature $\kappa$, the Jacobian of the geodesic w.r.t. $X_1$ is
    \begin{equation*}
        J_t = tT_{X_1\too X_t}P_{\dot{X}_1} +c_t\parr{X_1, x_0}T_{X_1\too X_t}P_{\dot{X}_1}^{\perp},
    \end{equation*}
    where $T_{X_1\too X_t}:T_{X_1}\gM\too T_{X_t}\gM$ is the parallel transport,  $c_t\parr{X_1, x_0}$ is a time dependent scaling,
    \begin{equation}
    c_t(X_1, x_0) = \begin{cases}
        \dfrac{\sin(t\omega_1\sqrt{\kappa})}{\sin(\omega_1\sqrt{\kappa})} & \text{if } \kappa > 0 \\[2ex]
        \dfrac{\sinh(t\omega_1\sqrt{|\kappa|})}{\sinh(\omega_1\sqrt{|\kappa|})} & \text{if } \kappa < 0
    \end{cases},
\end{equation}
and $\omega_1 = \norm{\dot{X}_1}_{\Sigma}$ the geodesic distance of $X_1$ and $x_0$.
% \end{proposition}
\end{restatable}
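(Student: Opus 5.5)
The map to differentiate is $X_1\mapsto \phi_t(X_1|x_0)=\exp_{X_1}\brac{(1-t)\log_{X_1}(x_0)}$ with $x_0$ fixed. I will not differentiate the exponential and logarithmic maps directly. Instead I reparametrize the same geodesic from the fixed endpoint $x_0$: with $\xi:=\log_{x_0}(X_1)\in T_{x_0}\gM$ we have
\begin{equation*}
\phi_t(X_1|x_0)=\exp_{x_0}(t\xi).
\end{equation*}
The map $X_1\mapsto \xi$ is the inverse of $\xi\mapsto \exp_{x_0}(\xi)$. Hence
\begin{equation*}
J_t=D\exp_{x_0}(t\xi)\,t\,\bigl(D\exp_{x_0}(\xi)\bigr)^{-1}.
\end{equation*}
This holds away from the cut locus, where $\log_{x_0}$ is smooth. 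So the problem reduces to the differential of the exponential map on a space of constant curvature, which is governed by Jacobi fields.

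\textbf{Step 1 (Jacobi fields).} Let $\gamma(s)=\exp_{x_0}(s\xi)$ and $\omega=\norm{\xi}_\Sigma$. By the Gauss lemma, $D\exp_{x_0}(s\xi)$ applied to $s\eta$ equals $J_\eta(s)$, the Jacobi field along $\gamma$ with $J_\eta(0)=0$ and $J_\eta'(0)=\eta$. Constant curvature gives $R(J,\dot\gamma)\dot\gamma=\kappa\bigl(\omega^2J-\langle J,\dot\gamma\rangle\dot\gamma\bigr)$. The Jacobi equation therefore decouples into two parts:
\begin{itemize}
\item for $\eta\parallel\xi$, $J_\eta(s)=s\,T_{x_0\too\gamma(s)}\eta$;
\item for $\eta\perp\xi$, $J_\eta(s)=\frac{\sin(s\omega\sqrt\kappa)}{\sqrt\kappa\,\omega}\,T_{x_0\too\gamma(s)}\eta$, with $\sinh$ and $|\kappa|$ in place of $\sin$ and $\kappa$ when $\kappa<0$.
\end{itemize}
To stay within the paper's extrinsic setup, I can check this directly. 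The geodesic is explicitly $\gamma(s)=\cos(s\omega\sqrt\kappa)x_0+\sin(s\omega\sqrt\kappa)\frac{\xi}{\omega\sqrt\kappa}$. Differentiating in $\xi$ along $\eta\perp\xi$ gives $\frac{\sin(s\omega\sqrt\kappa)}{\omega\sqrt\kappa}\eta$. Since $\eta$ is parallel along $\gamma$ (it is $\Sigma$-orthogonal to the 2-plane spanned by $x_0$ and $\xi$), this matches the formula above.

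\textbf{Step 2 (compose).} Apply the derivatives at $s=t$ and at $s=1$ to the decomposition $\eta=\eta_\parallel+\eta_\perp$ of a tangent vector at $x_0$.
\begin{itemize}
\item Radial component: $t\,T_{x_0\too X_t}\circ\bigl(T_{x_0\too X_1}\bigr)^{-1}=t\,T_{X_1\too X_t}$, since parallel transport along a single geodesic composes.
\item Orthogonal component: the factors $1/(\omega\sqrt\kappa)$ cancel in the ratio, leaving $\frac{\sin(t\omega\sqrt\kappa)}{\sin(\omega\sqrt\kappa)}\,T_{X_1\too X_t}$. This is $c_t$.
\end{itemize}

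\textbf{Step 3 (identify the projections).} At $X_1$, the transported radial direction is $\dot\gamma(1)$. This is parallel to $\dot X_1=\frac{d}{dt}\phi_t|_{t=1}=-\log_{X_1}(x_0)$ up to sign, and its norm is $\omega=d(x_0,X_1)=\norm{\dot X_1}_\Sigma$, so $\omega_1=\omega$. Therefore $P_{\dot X_1}$ and $P^\perp_{\dot X_1}$, restricted to $T_{X_1}\gM$ with the $\Sigma$-metric, are exactly the radial and orthogonal projections. Parallel transport preserves both subspaces along the geodesic, so it commutes with them. This yields the claimed formula for $J_t$.

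\textbf{Main obstacle.} The delicate part is the bookkeeping for indefinite $\Sigma$ (hyperbolic case), and getting signs and branches right. In that case $\omega\sqrt\kappa$ must be read as $\omega\sqrt{|\kappa|}$, and $\cos$, $\sin$ become $\cosh$, $\sinh$. One must also check that the tangent spaces are $\Sigma$-positive definite, so the projections are genuinely orthogonal. I would handle this by writing the explicit geodesic formula for both signs of $\kappa$ and redoing the derivative computation in Step 1 for each case. I would also note that the result holds off the cut locus, i.e.\ for $\omega\sqrt\kappa<\pi$ when $\kappa>0$.
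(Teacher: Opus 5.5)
Your proof is correct, and it takes a different, cleaner route than the paper.

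\textbf{The paper's route.} The paper differentiates $\phi_t(X_1|x_0)=\exp_{X_1}[(1-t)\log_{X_1}(x_0)]$ directly in $X_1$. It proceeds in four steps:
\begin{enumerate}
\item It splits the derivative into a base-point variation and an argument variation of $\exp_{X_1}$.
\item It identifies each with a Jacobi field along the geodesic issuing from $X_1$, with initial data $j(0)=v_1,\ D_sj(0)=0$ and $j(0)=0,\ D_sj(0)=v_1$ respectively.
\item It obtains $D_{X_1}\log_{X_1}(x_0)$ by differentiating the constraint $\exp_{X_1}(\log_{X_1}x_0)=x_0$.
\item It uses $\sin a\cos(as)-\cos a\sin(as)=\sin(a(1-s))$ to produce $c_{1-s}$, and then substitutes $X_t=\gamma(1-t)$.
\end{enumerate}

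\textbf{Your route.} You use the fact that $x_0$ is fixed and rewrite $X_t=\exp_{x_0}(t\log_{x_0}X_1)$. Then only $d\exp$ at a fixed base point appears, and $J_t$ is simply the map $J_\eta(1)\mapsto J_\eta(t)$ on Jacobi fields vanishing at $x_0$. The ratio $\sin(t\omega\sqrt\kappa)/\sin(\omega\sqrt\kappa)$ falls out immediately. You never need to define a base-point variation of $\exp_{X_1}$, which only makes sense once you specify how $V_1$ is carried along as $X_1$ moves. You also never need to invert $D\log$. Your extrinsic check via the explicit $\cos/\sin$ (resp.\ $\cosh/\sinh$) geodesic is more elementary than invoking Jacobi-field existence and uniqueness. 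You also state the validity condition $\sin(\omega\sqrt\kappa)\neq 0$ (no conjugate point), which the paper leaves implicit.

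\textbf{What the paper's route buys.} It keeps everything anchored at $X_1$ and in the interpolant's stated parametrization. It also yields $D_{X_1}\log_{X_1}(x_0)$ as a by-product.

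\textbf{Minor points.}
\begin{itemize}
\item The identity $d(\exp_{x_0})_{s\xi}(s\eta)=J_\eta(s)$ is not the Gauss lemma. It is the standard Jacobi-field description of $d\exp$ (e.g.\ Lee, Prop.~10.10).
\item Since $X_t=\gamma(t)$, you have $\dot\gamma(1)=\dot X_1$ exactly, not merely up to sign.
\end{itemize}
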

\end{myframe}
Proposition~(\ref{prop:J_closed_form}) allows us to compute the conditional drift~(\ref{eq:u_cond_riem}) in closed form. As it is composed of rank-1 matrices, it is efficient to evaluate. Its proof is based on results in Jacobi fields~\citep{lee2018introduction}, and given in Appendix~\ref{a:proofs}.

\yellowboxuncenter{
\paragraph{Riemannian Flow Sampling objective} The model $u_t^{\theta}$ is trained by regressing onto $u_{t|0}$ the conditional drift~(\ref{eq:u_cond_riem}) that is given in closed form in Proposition~\ref{prop:J_closed_form}.
\begin{equation}
    \gL_{\text{RFS}}\parr{\theta} = \E\brac{\norm{P_{X_t}^{\perp}u_t^{\theta}\parr{X_t} - u_{t|0}\parr{X_t|X_0}}^2_{\Sigma}}
\end{equation}
where $X_t$ is the geodesic interpolant~(\ref{eq:x_cond_fs_riem}), and the target distribution is approximated with $X_1^{\bar{\theta}}\sim p_1^{\bar{\theta}}$ samples of the current detached state of the model using the Riemannian Euler-Maruyama solver~(\ref{eq:euler_maruyama_riem}). Optimization is done using the adjusted to manifold version of Algorithm~\ref{alg:fs_train}.
}

%%%%%
\subsection{Flow sampling on a hyper-sphere}\label{sec:hyper_sphere}
As an example, we consider the case of a hyper-sphere $\gM = \sS^{d}$ given by setting the metric and the curvature to
\begin{equation}
    \Sigma= I,\quad \kappa=1.
\end{equation}
Given $X_0=x_0\in\sS^d$, and $X_1\sim q$, the geodesic distance is
\begin{equation}
    \omega_1 =\norm{\log_{X_1}\parr{x_0}}= \arccos\parr{X_1^\top x_0}.
\end{equation}
The derivative w.r.t. time of the geodesic interpolant~(\ref{eq:x_cond_fs_riem}) at time $t=1$ is
\begin{equation}
    \dot{X}_1 = -\log_{x_1}\parr{x_0}= \omega_1\frac{\cos{\omega_1}X_1 - x_0}{\sin{\omega_1}},
\end{equation}
and, the geodesic interpolant~(\ref{eq:x_cond_fs_riem}) is the SLERP function,
\begin{align}\label{eq:slerp}
    X_t &= \cos\parr{{(1-t)\omega_1}}X_1 - \sin\parr{(1-t)\omega_1}\frac{\dot{X}_1}{\omega_1},
\end{align}
and its derivative w.r.t. to time $t$ is
\begin{align}\label{eq:slerp_derivative}
    \dot{X}_t &= \omega_1\sin\parr{{(1-t)\omega_1}}X_1 + \cos\parr{(1-t)\omega_1}\dot{X}_1.
\end{align}
The parallel transport from $X_1$ to $X_t$ on the hyper-sphere is the Householder reflection about the midpoint $X_1 + X_t$,
\begin{equation}\label{eq:parallel_transport}
    T_{X_1\too X_t} = I-2P_{X_1 + X_t}.
\end{equation}
Since the parallel transport $T_{X_1\too X_t}$ is an orthogonal operator, using Proposition~\ref{prop:J_closed_form} the inverse Jacobian adjoint is
\begin{equation}\label{eq:slerp_inv_J_adj}
    \parr{J_t^{-1}}^* = \frac{1}{t}T_{X_1\too X_t}P_{\dot{x}_1}+ \frac{\sin\parr{\omega_1}}{\sin\parr{t\omega_1}}T_{X_1\too X_t}P_{\dot{X}_1}^\perp.
\end{equation}
This implies the correction to the conditional score is
\begin{equation}\label{eq:slerp_correction}
    \nabla_{\gM}\log\det\parr{J_t} = (d-1)\parr{ t\cot{t\omega_1} - \cot{\omega_1}}\frac{\dot{X}_1}{\omega_1}.
\end{equation}
Hence we have all terms in the conditional drift~(\ref{eq:u_cond_riem}) in closed form. Importantly, both the projection operator~(\ref{eq:orthogonal_projection}) and the parallel transport operator~(\ref{eq:parallel_transport}) include only rank 1 operators, thus are computationally efficient.

\begin{table*}[t]
\centering
\begin{minipage}{\linewidth}
\caption{Results for the synthetic energy function experiments. We report a geometric $\mathcal{W}_2$ metric~\citet{klein2024equivariant} and 1D energy histogram $E(\cdot)\,\mathcal{W}_2$ metric with respect to ground truth samples. See~\Cref{app:w2_metric} for more details. We use light and dark shading to denote best and 2nd best result respectively. \textsuperscript{\textdagger} We include baseline measurements over random subsets of the ground truth samples. This can be considered the ``optimal'' value for that metric, which was not clear in previous instantiations of this benchmark. Flow Sampling XL results are omitted for DW4 due to minimal improvement. Mean and standard error are reported across 5 training runs.
}
\centering
% \begin{minipage}{\linewidth}
%\resizebox{\textwidth}{!}{%
\renewcommand{\arraystretch}{1.0}
\setlength{\tabcolsep}{3pt}
% \resizebox{\textwidth}{!}{%

\begin{tabular}{@{} l cc cc cc @{}}
\toprule
 & \multicolumn{2}{c}{DW-4 $(N=4, d=2)$}      & \multicolumn{2}{c}{LJ-13 $(N=13, d=3)$}    & \multicolumn{2}{c}{LJ-55 $(N=55, d=3)$}
\\
\cmidrule(lr){2-3} \cmidrule(lr){4-5} \cmidrule(lr){6-7}
Method &$\mathcal{W}_2$ $\downarrow$ & $E(\cdot)$ $\mathcal{W}_2$ $\downarrow$
& $\mathcal{W}_2$ $\downarrow$ & $E(\cdot)$ $\mathcal{W}_2$ $\downarrow$
& $\mathcal{W}_2$ $\downarrow$ & $E(\cdot)$ $\mathcal{W}_2$ $\downarrow$ \\ \hline
PIS {\tiny\citep{zhang2022path}} & 0.68{\color{gray}\tiny$\pm$0.23} & 0.65{\color{gray}\tiny$\pm$0.25} &
 1.93{\color{gray}\tiny$\pm$0.07} & 18.02{\color{gray}\tiny$\pm$1.12} &
 4.79{\color{gray}\tiny$\pm$0.45}  & 228.70{\color{gray}\tiny$\pm$131.27} \\
DDS {\tiny\citep{vargas2023denoising}} &
 0.92{\color{gray}\tiny$\pm$0.11} & 0.90{\color{gray}\tiny$\pm$0.37} &
 1.99{\color{gray}\tiny$\pm$0.13} & 24.61{\color{gray}\tiny$\pm$8.99} &
 4.60{\color{gray}\tiny$\pm$0.09}  & 173.09{\color{gray}\tiny$\pm$18.01} \\
iDEM {\tiny\citep{akhound2024iterated}} &
 0.70{\color{gray}\tiny$\pm$0.06} & 0.55{\color{gray}\tiny$\pm$0.14} &
  1.61{\color{gray}\tiny$\pm$0.01} & 30.78{\color{gray}\tiny$\pm$24.46} &
 4.69{\color{gray}\tiny$\pm$1.52} & 93.53{\color{gray}\tiny$\pm$16.31} \\
AS {\tiny\citep{AS}} &
  0.62{\color{gray}\tiny$\pm$0.06} & 0.55{\color{gray}\tiny$\pm$0.12} &
 1.67{\color{gray}\tiny$\pm$0.01} &  2.40{\color{gray}\tiny$\pm$1.25} &
  4.04{\color{gray}\tiny$\pm$0.05} & 30.83{\color{gray}\tiny$\pm$8.19}\\
ASBS {\tiny\citep{liu2025asbs}} &
  \cellhi 0.43{\color{gray}\tiny$\pm$0.05} &  \cellhi 0.20{\color{gray}\tiny$\pm$0.11} &
 1.59{\color{gray}\tiny$\pm$0.03} &  1.99{\color{gray}\tiny$\pm$1.01} &
  4.00{\color{gray}\tiny$\pm$0.03} &  28.10{\color{gray}\tiny$\pm$8.15} \\
\hline
\textbf{Flow Sampling} &
 \cellhii 0.36{\color{gray}\tiny$\pm$0.03} & \cellhii 0.11{\color{gray}\tiny$\pm$0.04} &

\cellhi 1.58{\color{gray}\tiny$\pm$0.01} &
 \cellhi 0.97{\color{gray}\tiny$\pm$0.53} &
 \cellhi 3.98{\color{gray}\tiny$\pm$0.01} &\cellhi 21.32{\color{gray}\tiny$\pm$0.63} \\
 \textbf{Flow Sampling XL} &
 -- & -- &
 \cellhii 1.57{\color{gray}\tiny$\pm$0.01} & \cellhii 0.81{\color{gray}\tiny$\pm$0.26} &
 \cellhii 3.96{\color{gray}\tiny$\pm$0.01} & \cellhii 16.29{\color{gray}\tiny$\pm$0.93} \\
 \hline
 \addlinespace[0.15em]
 {\color{gray} Subsets of MCMC (ground truth)\textsuperscript{\textdagger}} & {\color{gray}0.31\tiny$\pm$0.01} & {\color{gray}0.10\tiny$\pm$0.03} &{\color{gray}1.57\tiny$\pm$0.01} & {\color{gray}0.35\tiny$\pm$0.03}& {\color{gray}3.85\tiny$\pm$0.05} &
 {\color{gray}0.57\tiny$\pm$0.05}\\
\bottomrule
\end{tabular}
% \end{minipage}
%}
\label{tab:synthetic}
\end{minipage}
\vspace{-10pt}
\end{table*}
\section{Related Works}
\textbf{Learning-Augmented Classical Samplers} \ \
MCMC and SMC methods provide asymptotically exact samples from unnormalized distributions, but can mix slowly or require many target evaluations in high-dimensional, multimodal settings. Learning-augmented variants
replace hand-designed proposals with normalizing flows or learnable
stochastic dynamics~\citep{albergo2019flow, arbel2021annealed,
gabrie2022adaptive, matthews2022continual, albergo2024nets,
holderrieth2025leaps}, typically requiring importance weighting or
accept--reject corrections that drive up energy-evaluation cost.

\textbf{Diffusion Samplers} \ \
Diffusion samplers adapt score matching to unnormalized targets via
auxiliary corrections (MCMC, importance weighting, or resampling), each
inflating energy-evaluation cost~\citep{phillips2024particle,
de2024target, chen2024sequential}.
PITA~\citep{akhoundprogressive} trains a temperature-annealed ladder of
diffusion samplers, but requires high-temperature MCMC data to
bootstrap. Off-policy variants~\citep{malkin2023trajectory,
richterimproved} use trajectory-level objectives over a replay buffer
but typically require parameterizing the model in terms of the energy
gradient~\citep{he2025no}. iDEM~\citep{akhound2024iterated} instead
estimates the target score by Monte Carlo along the noising path,
incurring many energy evaluations per step to control variance.

\textbf{Stochastic Optimal Control and Adjoint Sampling} \ \
A closely related line of work formulates sampling from unnormalized
distributions as a stochastic optimal control problem, closely
connected to Schr\"odinger bridges~\citep{zhang2022path,
vargas2023denoising}, leading to bridge-based diffusion samplers and
transport objectives~\citep{berner2023optimal, richterimproved,
vargas2023transport, chen2024sequential, AS, liu2025asbs}. \cite{nam2025enhancing} further incorporates molecular inductive bias through well-tempered collective-variable biasing.
Flow Sampling is inspired by this line of work but departs from SOC
formulations by avoiding optimal couplings, yielding a simpler
training objective that supports non-memoryless noise schedules and
arbitrary source distributions without requiring an additional corrector network. In Appendix~\ref{app:as_connection} we show that with a Brownian-bridge
supervising path and matching Gaussian source, the Flow Sampling and
Adjoint Sampling regression targets agree in conditional expectation.

\textbf{Reward-tilting and Fine-Tuning} \ \
RL-style fine-tuning methods such as GRPO~\citep{shao2024deepseekmath}
and Flow-GRPO~\citep{liu2025flow} optimize a pretrained generative
model toward high-reward samples via policy gradients, without
explicitly targeting a particular distribution. A complementary line
instead targets the reward-tilted distribution—the base reweighted by
an exponential reward.
Concurrent to our work, Tilt Matching~\citep{potaptchik2025tilt}
regresses a velocity field that anneals from a base toward the tilted
target using only reward evaluations, while Meta Flow
Maps~\citep{potaptchik2026meta} amortize one-step posterior sampling
across intermediate noise levels to enable steering and off-policy
fine-tuning.

\begin{figure*}
\centering
    \begin{minipage}{0.25\linewidth}
        \centering
        \includegraphics[width=0.8\linewidth]{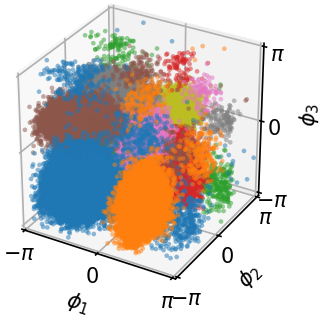}
        \subcaption*{ASBS}
    \end{minipage}
    \begin{minipage}{0.25\linewidth}
    \centering
    \includegraphics[width=0.8\linewidth]{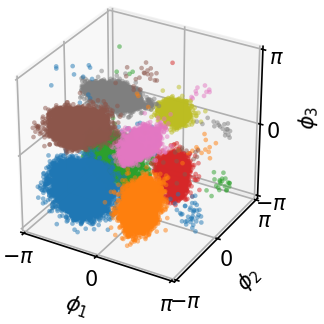}
        \subcaption*{Flow Sampling}
    \end{minipage}
    \begin{minipage}{0.25\linewidth}
    \centering
        \includegraphics[width=0.8\linewidth]{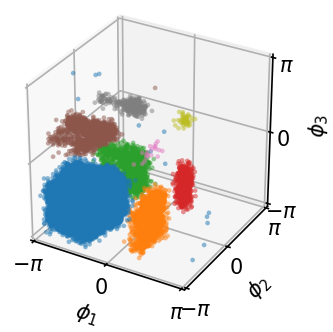}
        \subcaption*{MD}
    \end{minipage}
    \caption{Ala4 molecular structures represented by torsion coordinates $(\phi_1,\phi_2,\phi_3)$ with $2^3$ distinct modes.}\label{fig:ala4_torsions}
\end{figure*}

\vspace{-1em}
\section{Experiments}
\subsection{Synthetic Energy Functions}
We evaluate Flow Sampling on standard synthetic $n$-particle energy benchmarks \citep{kohler2020equivariant, midgley2023flow, klein2024equivariant, akhound2024iterated}: a 2D 4-particle Double-Well potential (DW-4), a 3D 13-particle Lennard--Jones system (LJ-13), and a 55-particle Lennard--Jones system (LJ-55).
The states are particle positions $x \in \mathbb{R}^{n \times d}$, corresponding to  $\texttt{shape}=(N,d)$ in Algorithm~1 and reward $r(x) = -E(x)$, where $E$ is the energy defined by the potentials LJ or DW. Details in Appendix~\ref{app:synthetic_energy}. We report geometric and energy $\mathcal{W}_2$ distances against long-run MCMC reference samples \citep{klein2024equivariant}.

\textbf{Baselines } We compare against iDEM \citep{akhound2024iterated}, PIS \citep{zhang2022path}, DDS \citep{vargas2023denoising}, and the adjoint-based solvers AS~\cite{AS} and ASBS~\cite{liu2025asbs}.
All methods use a 5-layer EGNN \citep{satorras2021n}, with a 10-layer variant (Flow Sampling XL) evaluated for our method. We use the harmonic prior as in ASBS~\cite{jing2023eigenfold}. Additionally, we make use of the zero-center-mass projected diffusion to handle translation invariance similarly to~\cite{AS, liu2025asbs}.

Flow Sampling outperforms prior methods across all systems (Table~\ref{tab:synthetic}), achieving the lowest $E(\cdot)\mathcal{W}_2$ and improved concentration in low-energy regions compared to the state-of-the-art ASBS~\cite{liu2025asbs}.

\subsection{Sampling Conformers of Peptides}

We evaluate Flow Sampling on two widely-studied peptide systems: $22$-atom alanine dipeptide (Ala2) and $42$-atom alanine tetrapeptide (Ala4), which are modeled with a classical force-field and implicit water solvation using the OpenMM library \cite{eastman2017openmm}. Notably, we sample Ala2 and Ala4 conformations in \emph{full-atom} Cartesian coordinates, unlike prior sampling frameworks \cite{liu2025asbs, zhang2022path} which relied on internal torsional coordinate representations and \cite{nam2026enhancing}, which makes use of bespoke collective variables biases and pretraining data. As baseline, we take the ASBS configuration from \cite{nam2026enhancing}, but skip their MD pretraining step. Both Flow Sampling and ASBS utilize the same $E(3)$-equivariant PaiNN architecture~\cite{schutt2021equivariant}.

\noindent \textbf{Results }
For Ala2, we follow prior works~\citep{zhang2022path, liu2025asbs} and report both the 1D KL divergence of each torsional dihedral angles $(\phi,\psi)$ and the Jensen-Shannon divergence (JSD) of the 2D joint against ground truth samples generated via long molecular dynamics (MD) simulation. We also report the energy $\mathcal{W}_2$ to measure concentration of samples about low energy. Flow sampling performs favorably compared to ASBS on all metrics shown in Table~\ref{tab:ala2} and accurately recovers the ground truth density of the backbone dihedral angles shown by Ramachandran plots in Figure~\ref{fig:all_rama}. Additionally, \Cref{tab:ala2} shows that Flow Sampling appears to degrade in performance more gracefully than ASBS as the training NFE decreases, where ASBS fails at NFE 128 and 64, as indicated by the large energy-$\mathcal{W}_2$ metric. Lower model NFE can accelerate training and inference, which we discuss in greater detail later in~\Cref{subsec:efficiency}. Additional experiment details energy histogram plots can be found in~\Cref{app:ala_additional}.

For Ala4, we report a qualitative study of mode coverage and sample concentration via a 3D scatter plot over torsion angles $(\phi_1,\phi_2,\phi_3)$, which gives rise to $2^3=8$ metastable modes corresponding to the possible combinations of the three backbone torsional states. 
As depicted in~\Cref{fig:ala4_torsions}, Flow Sampling covers all 8 modes observed in the MD reference samples. 
The samples are also more tightly concentrated around the MD-supported torsional regions than those generated by ASBS, suggesting improved alignment with the relevant conformational landscape. 
This indicates that Flow Sampling can capture the multimodal structure of Ala4 while maintaining better sample concentration in physically plausible regions.
  \begin{table}                                                                                                                    
  \centering                                                                                                                       
      \captionof{table}{Results for ALA2. We report $D_{\mathrm{KL}}$ on 1-D marginals for $\phi$ and $\psi$, Jensen--Shannon      
  divergence (JSD) on the joint distribution of torsion angles $(\phi,\psi)$, and an energy-histogram $\mathcal{W}_2$ metric with  
  respect to ground-truth samples. Light and dark shading denote the best and second-best method respectively.}
      \label{tab:ala2}

      \setlength{\tabcolsep}{3pt}
      \renewcommand{\arraystretch}{1.0}

      \resizebox{\linewidth}{!}{%
      \begin{tabular}{@{}lccccc@{}}
      \toprule
      % Stacked the headers using \multirow to bridge the \cmidrule nicely
      \multirow{2}{*}{Method} & \multirow{2}{*}{\begin{tabular}[c]{@{}c@{}}NFE \\ (Train)\end{tabular}} &
  \multicolumn{2}{c}{$D_{\mathrm{KL}}$ 1-D $(\downarrow)$} & \multicolumn{1}{c}{$(\phi,\psi)$ joint} & \multicolumn{1}{c}{Energy}
  \\
      \cmidrule(lr){3-4} \cmidrule(lr){5-5} \cmidrule(lr){6-6}
       & & $\phi$ & $\psi$ & JSD $(\downarrow)$ & $\mathcal{W}_2$ $(\downarrow)$ \\
      \midrule
      % Used \multirow{4}{*}{...} to center the method name across the 4 rows
      \multirow{4}{*}{ASBS} & 1024 & 0.504{\color{gray}\tiny$\pm$0.006} & 0.726 {\color{gray}\tiny$\pm$0.300} &
  0.242{\color{gray}\tiny$\pm$0.042} & 8.650{\color{gray}\tiny$\pm$0.371} \\
                            & 256  & 0.412{\color{gray}\tiny$\pm$0.038} & 0.631{\color{gray}\tiny$\pm$0.419} &
  0.233{\color{gray}\tiny$\pm$0.090} & 7.290{\color{gray}\tiny$\pm$2.625} \\
                            & 128  & 0.663{\color{gray}\tiny$\pm$0.460} & 0.550{\color{gray}\tiny$\pm$0.267} &
  0.285{\color{gray}\tiny$\pm$0.158} & 1.0e7{\color{gray}\tiny$\pm$1.0e7} \\
                            & 64   & 1.127{\color{gray}\tiny$\pm$0.002} & 0.821{\color{gray}\tiny$\pm$0.006} &
  0.441{\color{gray}\tiny$\pm$0.002} & 6.0e7{\color{gray}\tiny$\pm$8.8e5} \\
      \midrule
      \multirow{4}{*}{\textbf{Flow Sampling}} & 1024 & \cellhii 0.031{\color{gray}\tiny$\pm$0.004} & \cellhii
  0.008{\color{gray}\tiny$\pm$0.002} & \cellhii 0.018{\color{gray}\tiny$\pm$0.001} & \cellhii 0.637{\color{gray}\tiny$\pm$0.267} \\
                                              & 256  & \cellhi 0.235{\color{gray}\tiny$\pm$0.067} & \cellhi
  0.211{\color{gray}\tiny$\pm$0.035} & \cellhi 0.118{\color{gray}\tiny$\pm$0.023} & \cellhi 3.266{\color{gray}\tiny$\pm$0.254} \\
                                              & 128  & 0.267{\color{gray}\tiny$\pm$0.023} & 0.344{\color{gray}\tiny$\pm$0.057} &
  0.152{\color{gray}\tiny$\pm$0.021} & 3.579{\color{gray}\tiny$\pm$0.311} \\
                                              & 64   & 0.316{\color{gray}\tiny$\pm$0.058} & 0.523{\color{gray}\tiny$\pm$0.006} &
  0.190{\color{gray}\tiny$\pm$0.005} & 4.113{\color{gray}\tiny$\pm$0.023} \\
      \bottomrule
      \end{tabular}%
      }
  \vspace{-10pt}
  \end{table}

\begin{figure}
\centering
\begin{subfigure}{0.32\linewidth} \centering \includegraphics[width=\linewidth]{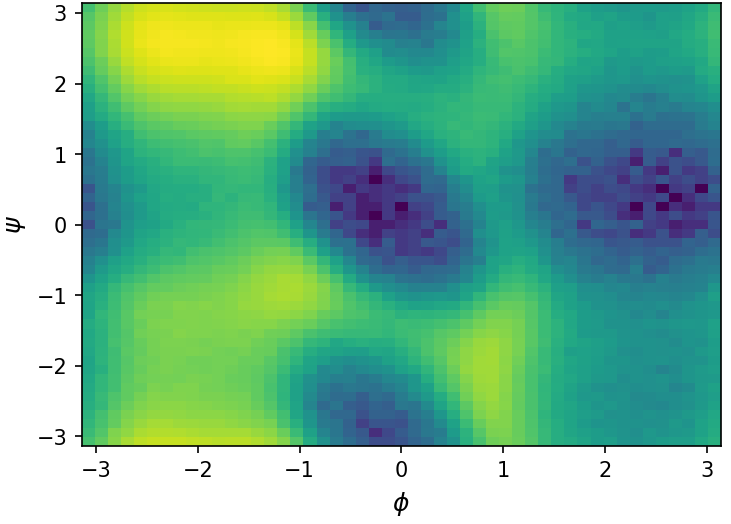} \caption*{ASBS} \label{fig:plot1} \end{subfigure} \hfill \begin{subfigure}{0.32\linewidth} \centering \includegraphics[width=\linewidth]{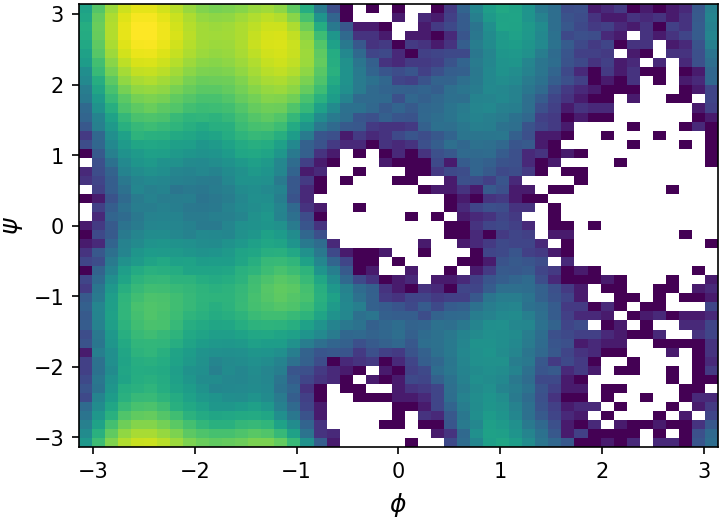} \caption*{Flow Sampling} \label{fig:plot2} \end{subfigure} \hfill \begin{subfigure}{0.32\linewidth} \centering \includegraphics[width=\linewidth]{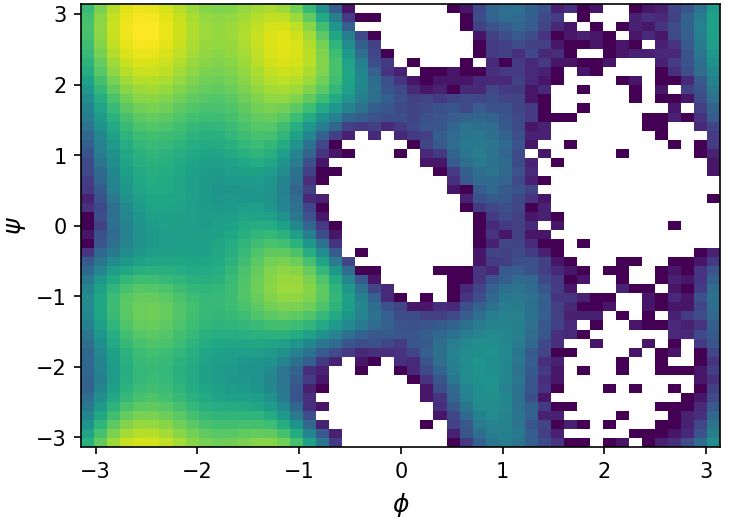} \caption*{MD} \label{fig:plot3}
\end{subfigure} \caption{Ala2 Ramachandran plots over $10^6$ samples.} \label{fig:all_rama}
\vspace{-15pt}
\end{figure}

\begin{table*}[t]
    \caption{
        Results on large-scale amortized conformer generation, evaluated on two test sets, SPICE and GEOM-DRUGS using the same 12-layer EGNN architecture across all methods. We report the coverage recall and precision (\%) and Absolute Mean RMSD (AMR) for the threshold \textbf{1.25\AA} and standard deviation across test molecules. See \Cref{sec:exp-detail} and Appendix~\ref{app:conformer_details} for details. Precision coverage was not reported in ASBS~\cite{liu2025asbs}. $^\dagger$ Re-implementation with logit normal time weighting. $^*$ Auxiliary network roughly doubles the number of network evaluation during optimization.
    \vspace{-5pt}
    }
    \label{tab:conformer_generation}
    \vskip 0.07in % for arxiv
    % \vskip 0.05in % for neurips
    \centering
    \renewcommand{\arraystretch}{1.3}
    \resizebox{\textwidth}{!}{%
\begin{tabular}{@{} l c cccc cccc @{}}
\toprule
& & \multicolumn{4}{c}{SPICE} & \multicolumn{4}{c}{GEOM-DRUGS} \\
\cmidrule(lr){3-6} \cmidrule(lr){7-10}
Method &
& \multicolumn{2}{c}{Recall} & \multicolumn{2}{c}{Precision}
& \multicolumn{2}{c}{Recall} & \multicolumn{2}{c}{Precision} \\
& NFE (Train)
& Cov. $\uparrow$ & AMR $\downarrow$ & Cov. $\uparrow$ & AMR $\downarrow$
& Cov. $\uparrow$ & AMR $\downarrow$ & Cov. $\uparrow$ & AMR $\downarrow$ \\
\midrule

RDKit ETKDG & --
& 72.74{\color{gray}\tiny$\pm$33.18} & 1.04{\color{gray}\tiny$\pm$0.52}
& \cellhii 69.68{\color{gray}\tiny$\pm$37.11} & \cellhii 1.14{\color{gray}\tiny$\pm$0.64}
& 63.51{\color{gray}\tiny$\pm$34.74} & 1.15{\color{gray}\tiny$\pm$0.61}
& \cellhii 69.77{\color{gray}\tiny$\pm$38.23} & \cellhii 1.09{\color{gray}\tiny$\pm$0.66} \\

ASBS$^*$~\cite{liu2025asbs} & 512
& 89.66{\color{gray}\tiny$\pm$19.42} & \cellhii 0.86{\color{gray}\tiny$\pm$0.24}
& -- & --
& \cellhii 74.50{\color{gray}\tiny$\pm$32.32} & \cellhii 1.05{\color{gray}\tiny$\pm$0.41}
& -- & -- \\

\multirow{1}{*}{AS$^{\dagger}$~\cite{AS}} & 256
& 88.60{\color{gray}\tiny$\pm$20.83} & \cellhi 0.87{\color{gray}\tiny$\pm$0.25}
& 59.82{\color{gray}\tiny$\pm$30.90} & 1.19{\color{gray}\tiny$\pm$0.37}
& 69.37{\color{gray}\tiny$\pm$34.44} & 1.10{\color{gray}\tiny$\pm$0.44}
& 40.15{\color{gray}\tiny$\pm$29.80} & 1.59{\color{gray}\tiny$\pm$0.54} \\
& 128
& 77.97{\color{gray}\tiny$\pm$29.9} & 0.98{\color{gray}\tiny$\pm$0.30}
&  55.30{\color{gray}\tiny$\pm$32.88} & 1.24{\color{gray}\tiny$\pm$0.40}
& 57.42{\color{gray}\tiny$\pm$35.33} &  1.24{\color{gray}\tiny$\pm$0.46}
& 33.14{\color{gray}\tiny$\pm$30.71} & 1.75{\color{gray}\tiny$\pm$0.67} \\
& 64
& 29.13{\color{gray}\tiny$\pm$--} & 1.43 {\color{gray}\tiny$\pm$--}
& 20.37{\color{gray}\tiny$\pm$--} & 1.46 {\color{gray}\tiny$\pm$--}
& 5.50 {\color{gray}\tiny$\pm$--} & 1.89 {\color{gray}\tiny$\pm$--}
& 4.15{\color{gray}\tiny$\pm$--} & 1.89 {\color{gray}\tiny$\pm$--} \\

\addlinespace[0.25em]
\midrule
\addlinespace[0.15em]

\multirow{1}{*}{\textbf{Flow Sampling}}
& 256
& \cellhii 91.89{\color{gray}\tiny$\pm$\phantom{0}7.51} & \cellhii 0.86{\color{gray}\tiny$\pm$0.23}
& \cellhi 62.35{\color{gray}\tiny$\pm$25.76} & \cellhi 1.15{\color{gray}\tiny$\pm$0.35}
& \cellhi 72.61{\color{gray}\tiny$\pm$44.81} & 1.55{\color{gray}\tiny$\pm$0.34}
& \cellhi 44.81 {\color{gray}\tiny$\pm$31.55} & \cellhi 1.55{\color{gray}\tiny$\pm$0.46} \\
& 128

&\cellhi 91.39{\color{gray}\tiny$\pm$16.63} & \cellhi 0.87{\color{gray}\tiny$\pm$0.22}
& 61.27{\color{gray}\tiny$\pm$30.55} & 1.18{\color{gray}\tiny$\pm$0.33}
& 71.20{\color{gray}\tiny$\pm$34.21} & \cellhi 1.08{\color{gray}\tiny$\pm$0.44}
& 41.63{\color{gray}\tiny$\pm$31.80} & 1.61{\color{gray}\tiny$\pm$0.62} \\
& 64
& 90.13{\color{gray}\tiny$\pm$\phantom{0}17.85} & \cellhi 0.87{\color{gray}\tiny$\pm$0.23}
& 60.75{\color{gray}\tiny$\pm$31.02} & 1.18{\color{gray}\tiny$\pm$0.33}
& 71.14{\color{gray}\tiny$\pm$34.40} & 1.09{\color{gray}\tiny$\pm$0.43}
& 40.82{\color{gray}\tiny$\pm$31.67} & 1.61{\color{gray}\tiny$\pm$0.61} \\

\bottomrule
\end{tabular}
}
\vspace{-10pt}
\end{table*}

\subsection{Amortized Conformer Generation}\label{sec:exp-detail}
We evaluate Flow Sampling on the large-scale amortized molecular conformer generation benchmark of \citet{AS}, where the target is a conditional Boltzmann distribution $r(x|g) = -\tfrac{1}{\tau} E(x | g)$, with energy given by the learned eSEN force field~\cite{fu2025learning} and conditioning $g$ that provides the molecule topology (atom types and bonds).

For a molecule with $N$ atoms, a conformer is represented as atomic coordinates $x \in \mathbb{R}^{N \times 3}$, corresponding to $\texttt{shape} = (N,3)$ in Algorithm~1. Generation is conditioned on a fixed molecular graph $g$, as input to the drift network and energy. The model therefore defines a conditional diffusion process over atomic positions.

We train on 24,775 molecular topologies from SPICE \citep{eastman2023spice} and evaluate on 80 held-out SPICE molecules and a generalization dataset of 80 GEOM-DRUGS molecules \citep{axelrod2022geom}, following the protocol of \citet{AS}. RDKit~\cite{landrum2006rdkit} is a widely used fast conformer generation tool that provides a strong baseline. For more details see~\cite{AS}.

\subsection{Von Mises-Fisher Mixtures on $\mathbb{S}^2$}
Directional and rotational data arise in applications such as  robotics and graphics, and are naturally represented on spherical domains~\cite{bullo2005geometric, bronstein2021geometric}. To validate the Riemannian Flow Sampling framework developed in \Cref{sec:hyper_sphere}, we consider sampling on the two-dimensional unit sphere
\begin{align*}
\mathbb{S}^2 = \{ x \in \mathbb{R}^3 : \|x\|_2 = 1 \}.
\end{align*}
We construct a multimodal target distribution given by a mixture of von Mises--Fisher (vMF)~\cite{banerjee2005clustering} components, the spherical analogue of Gaussians, with density
\begin{align*}
f_{\mathrm{vMF}}(x) \propto \exp(\kappa\, \mu^\top x)
\end{align*}
parameterized by mean directions $\mu \in \mathbb{S}^2$ and concentration $\kappa \ge 0$.
We then construct a mixture distribution
\begin{equation}
q(x) \propto \sum_{k=1}^{K} w_k\, f_{\mathrm{vMF}}(x; \mu_k, \kappa_k),
\end{equation}
where the mixture components are evenly weighted ($w_k = 1/K$) and the mean directions $\{\mu_k\}_{k=1}^K$ are chosen to lie along the coordinate axes and diagonal directions of $\mathbb{R}^3$.

Using the closed-form conditional drift derived in \Cref{sec:hyper_sphere}, we apply iterative Flow Sampling directly on the manifold in ambient space without reparameterizations. As shown in Figure~\ref{fig:vMF_vis}, the learned diffusion accurately recovers the multimodal target density, demonstrating that Flow Sampling extends naturally beyond Euclidean spaces and can operate directly on curved geometries.

\subsection{Discussion on model efficiency}\label{subsec:efficiency}
Training computational cost per a single exploration, followed by a training phase~\Cref{alg:fs_train} is composed of three primary components: i) Number of energy gradient evaluations ii) Number of function evaluations (NFE (Train)) to sample $X_1^{\bar{\theta}}\sim p_1^{\bar{\theta}}$, and iii) number of optimization steps. Regarding i) i.e. number of energy gradient evaluations, AS~\cite{AS}, ASBS~\cite{liu2025asbs} and Flow Sampling are equally efficient. Regarding iii), AS and Flow Sampling are performing 100 gradient updates per optimization stage. However, since ASBS requires fitting an auxiliary corrector network which doubles the cost of optimization and results in about $\times 2$ gradient updates per optimization stage. Most importantly, ii) the NFE to acquire a sample from the model using the Euler-Maruyama solver. 

Tables~\ref{tab:ala2} and~\ref{tab:conformer_generation} show that Flow Sampling maintains consistent performance across metrics even when substantially reducing NFE (Train). 
In particular, reducing NFE from 512 to 64 on conformer generation in~\Cref{tab:conformer_generation} potentially cuts the exploration-phase cost by $\times 8$ relative to the 512-NFE baseline, which dominates the training cost. 
Furthermore, we observe strong degradation in the low-NFE regime for the baselines: ASBS on Ala2 in~\Cref{tab:ala2} and AS on conformer generation in~\Cref{tab:conformer_generation}. 
In contrast, Flow Sampling remains robust in both settings, indicating that the efficiency of the exploration phase significantly improves the scalability of the method for larger models or datasets.

\section{Conclusion}
We introduce Flow Sampling, a principled method for learning to sample from unnormalized densities by matching conditional denoising diffusion drifts~(\ref{eq:cond_drift_path}). We present a simplified formulation for training diffusion-based samplers, and provide a pytorch pseudo-code in Algorithm~\ref{alg:fs_train}.  We extend diffusion samplers to non-Euclidean geometries, where we provide closed-form formulas for the conditional drift on constant curvature Riemannian manifolds. We validate our Flow Sampling method empirically, observing strong results in the synthetic-energy dataset as shown in Table~\ref{tab:synthetic} and on small peptides in~\Cref{tab:ala2}. On the amortized conformer generation benchmark reported in Table~\ref{tab:conformer_generation}, our Flow Sampling demonstrates a $4\times$--$8\times$ speedup in simulation training, potentially enabling a considerable increase in scale of diffusion-based samplers.

\textbf{Limitations} \ \
Flow Sampling is trained as a fixed-point procedure: since target samples
are unavailable, the ideal conditional diffusion-matching objective is
approximated using samples from the detached current sampler. This is
effective empirically, but we do not yet provide global convergence
guarantees for the resulting replay-buffer dynamics. Finally,
our Riemannian extension relies on closed-form conditional drifts for
constant-curvature manifolds; extending the same efficiency to more general
manifolds and constrained spaces remains an important direction.

\newpage

\subsection*{Impact Statement}
This work is primarily intended for scientific and industrial research applications and does not introduce immediate risks of misuse. Nevertheless, as with advances in generative modeling more broadly, care should be taken to ensure responsible deployment, especially in application domains such as drug discovery and materials science, where modeling biases or inaccuracies could influence downstream decisions.

Flow Sampling has direct relevance to computational chemistry, including molecular modeling and conformer generation, and may contribute to accelerating research in drug development, materials engineering, and biophysical modeling.
\bibliography{example_paper}
\bibliographystyle{icml2026}

%%%%%%%%%%%%%%%%%%%%%%%%%%%%%%%%%%%%%%%%%%%%%%%%%%%%%%%%%%%%%%%%%%%%%%%%%%%%%%%
%%%%%%%%%%%%%%%%%%%%%%%%%%%%%%%%%%%%%%%%%%%%%%%%%%%%%%%%%%%%%%%%%%%%%%%%%%%%%%%
% APPENDIX
%%%%%%%%%%%%%%%%%%%%%%%%%%%%%%%%%%%%%%%%%%%%%%%%%%%%%%%%%%%%%%%%%%%%%%%%%%%%%%%
%%%%%%%%%%%%%%%%%%%%%%%%%%%%%%%%%%%%%%%%%%%%%%%%%%%%%%%%%%%%%%%%%%%%%%%%%%%%%%%
\newpage
\appendix
\onecolumn
\section{Synthetic Energy Experiment Details}\label{app:synthetic_energy}

\subsection{Double Well Potential (DW-4)}
We use the same double-well potential as in iDEM~\citep{akhound2024iterated}, which was originally proposed in~\citet{kohler2020equivariant}. DW-4 describes a pair-wise distance potential energy for a system of 4 particles $\{x_1, x_2, x_3, x_4 \}$, where each particle has 2 spatial dimensions $x_i \in \mathbb{R}^2$ $(d=8)$. The potentials analytical form is given by:
\begin{equation}
    E(x) = \frac{1}{\tau}\sum_{ij} a (d_{ij} - d_0) + b (d_{ij} - d_0)^2 + c (d_{ij} - d_0)^4,\quad d_{ij} = \lVert x_i - x_j \rVert_2
\end{equation}
where we set $a=0$, $b=-4$, $c=0.9$ and temperature $\tau = 1$.
\subsection{Lennard-Jones Potential (LJ-13, LJ-55)}
Similarly to DW-4, the Lennard-Jones potential is also based pair-wise distances of a system with $n$ particles, but each each particle has 3 spatial dimensions. Its analytical form is given by
\begin{equation}
    E^{\text{LJ}}(x) = \frac{\epsilon}{\tau} \sum_{ij} \left( \left(\frac{r_m}{d_{ij}}\right)^6 - \left(\frac{r_m}{d_{ij}}\right)^{12} \right),\quad  d_{ij} = \lVert x_i - x_j \rVert_2
\end{equation}
where \( r_m, \tau, \epsilon \) and \( c \) are physical constants. As in \citet{kohler2020equivariant} and \citet{akhound2024iterated}, we add the additional a harmonic potential:

\begin{equation}
    E^{\text{osc}}(x) = \frac{1}{2} \sum_i \| x_i - x_{\text{COM}} \|^2
\end{equation}

where \( x_{\text{COM}} \) refers to the center of mass of the system. Therefore, the final energy is then \( E^{\text{Tot}} = E^{\text{LJ}}(x) + c E^{\text{osc}}(x) \), for \( c \) the oscillator scale. As in previous work, we use \( r_m = 1, \tau = 1, \epsilon = 1 \) and \( c = 1.0 \).

\subsection{Architectures and Hyperparameters}
\label{app:synthetic_hparams}

We provide the main model and training hyperparameters used for the synthetic energy experiments in~\Cref{tab:synthetic_hparams}. 
For all three systems, we use an EGNN architecture following the setup of~\citet{akhound2024iterated}. 
Results for iDEM are taken from the AS evaluation in~\cite{AS}. 
Each outer iteration consists of an exploration phase, where new samples are generated and their energy gradients are evaluated, followed by an optimization phase over the replay buffer.

\begin{table}
\centering
\caption{
Hyperparameters for the synthetic energy experiments. 
We report the architecture, replay-buffer configuration, and optimization settings used for DW-4, LJ-13, and LJ-55.
}
\label{tab:synthetic_hparams}
\small
\setlength{\tabcolsep}{7pt}
\begin{tabular}{lccc}
\toprule
\textbf{Hyperparameter} & \textbf{DW-4} & \textbf{LJ-13} & \textbf{LJ-55} \\
\midrule
\multicolumn{4}{l}{\textit{Model}} \\
Hidden features & 128 & 128 & 128 \\
EGNN layers & 3 & 5 & 5 \\
\midrule
\multicolumn{4}{l}{\textit{Base SDE}} \\
Harmonic source $\sigma$ & $1$ & $1$ & $1$ \\
Noise scaling $\gamma$ & Adaptive, Eq.~\eqref{eq:gamma_adapt} ($c=1$) & Adaptive, Eq.~\eqref{eq:gamma_adapt} ($c=1$) & Adaptive, Eq.~\eqref{eq:gamma_adapt} ($c=1$) \\
\midrule
\multicolumn{4}{l}{\textit{Replay buffer}} \\
Buffer size & 10{,}000 & 10{,}000 & 10{,}000 \\
New samples per outer iteration & 1{,}024 & 1{,}024 & 128 \\
Batch size & 512 & 512 & 128 \\
\midrule
\multicolumn{4}{l}{\textit{Training}} \\
Epochs & 5{,}000 & 5{,}000 & 5{,}000 \\
Iterations per Epoch & 200 & 300 & 300 \\
Learning rate & $3\times 10^{-4}$ & $3\times 10^{-4}$ & $3\times 10^{-4}$ \\
Reward clipping value & 100.0 & 100.0 & 100.0 \\
\bottomrule
\end{tabular}
\end{table}

\paragraph{Adaptive noise scaling.}
Since the magnitude of the reward-gradient signal can vary substantially across systems and during training, we adapt the diffusion coefficient $\gamma$ using the current replay buffer. 
Let $\mathcal{B}$ denote the replay-buffer distribution over stored samples $x_1$ and their reward gradients. 
We set
\begin{equation}\label{eq:gamma_adapt}
\gamma =
\frac{c}{
\sqrt{
\mathbb{E}_{x_1 \sim \mathcal{B}}
\left[
\left\lVert \nabla r(x_1) \right\rVert^2
\right]
+ \varepsilon
}}
,
\end{equation}
where $c$ controls the scale of the reward-gradient contribution and $\varepsilon>0$ is a small constant for numerical stability. 
In practice, the expectation is estimated from the stored gradients in the replay buffer. 
This makes $\gamma$ inversely proportional to the empirical RMS reward-gradient norm, reducing sensitivity to the absolute gradient scale and to the manual clipping threshold.
\subsection{Reported Metrics}\label{app:w2_metric}

\paragraph{Geometric $\mathcal{W}_2$}

Because our energy and diffusion process is invariant to rotational and permutations symmetries, we also take these symmetries into account when measuring distance between generated and ground truth point clouds (\eg, from long run MCMC). Note this $\mathcal{W}_2$ metric is different from what is reported in~\citet{akhound2024iterated}, which uses the euclidean metric.

\begin{equation}
    \mathcal{W}_2(\mu, \nu) = \left( \inf_{\pi} \int \pi(x,y) d(x,y)^2 \,\diff{x}\diff{y} \right)^{\frac{1}{2}}
\end{equation}

where \( \pi \) is the transport plan with marginals constrained to \( \mu \) and \( \nu \) respectively. Here the distance metric $d$ takes into account all point-cloud symmetries, and is obtained by minimizing the squared Euclidean distance over all possible combinations of rotations, reflections ($O(d)$), and permutations ($\mathbb{S}(k)$) for $k$ particles of spatial dimension $d$.

\begin{equation}
    d(x_0, x_1) = \min_{R \in O(d), P \in \mathbb{S}(k)} \| x_0 - (R\otimes P) x_1 \|_2^2.
\end{equation}

However, computing the exact minimal squared distance is computationally infeasible in practice. Therefore, we adopt the approach of~\citet{kohler2020equivariant} and approximate the minimizer by performing a sequential search

\begin{equation}
    d(x_0, x_1) \approx \min_{R \in SO(d)} \| x_0 - (R\otimes \tilde{P}) x_1 \|_2^2, \quad
    \tilde{P} = \arg\min_{P \in \mathbb{S}(k)} \| x_0 - P x_1 \|_2^2.
\end{equation}

\paragraph{Energy $\mathcal{W}_2$ ($E(\cdot)\,\mathcal{W}_2$)}
An informative way to assess the quality of the generated samples is to look at their energy distribution with respect to the ground truth distribution obtained from energies of long-run MCMC simulations. This shows how well the generated samples are avoiding high-energy regions. Specifically, we use the Wasserstein-2 ($\mathcal{W}_2$) distance on the 1-dimensional energy distribution on $\mathbb{R}$. This results in the standard euclidean $\mathcal{W}_2$ metric

\begin{equation}
\mathcal{W}_2(E_{\mu}, E_{\nu}) = \left( \inf_{\pi} \int \pi(x,y)|x - y|^2 \diff{x}\diff{y} \right)^{\frac{1}{2}},
\end{equation}

\section{Alanine Dipeptide and Tetrapeptide Experiment Details}\label{app:ala_additional}

\paragraph{Energy Function}
For both Ala2 and Ala4, we utilize a classical force-field and implicit water solvation using the OpenMM library \cite{eastman2017openmm}, with settings identical to~\cite{nam2026enhancing}.

\paragraph{Model Architecture}
Following the setup from~\cite{nam2026enhancing}, we employ the PaiNN architecture~\cite{schutt2021equivariant} which is an $E(3)$--equivariant graph neural network specifically suited for molecular energy and force-field prediction. \cite{nam2026enhancing} modifies the architecture to include time-conditioning,  breaking parity symmetries, which restricts the symmetry to spatial $SE(3)$-equivariance. We use the same hyper-parameters which are restated below in~\Cref{tab:ala_hparams}.

\paragraph{Reference Data}
For Ala2, we use the same $10^7$ test samples from~\cite{midgley2023flow}, also used by~\cite{nam2026enhancing}. These samples are generated via replica exchange MD across 21 replicas of various temperatures spanning $300$K to $1300$K and require $2.3 \times 10^{10}$ energy evaluations to generate.
  
For Ala4, we directly use the test set of $5\times 10^7$ samples generated by~\cite{nam2026enhancing}, but only subsampling $50{,}000$ for the generated 3D scatter plot~\Cref{fig:ala4_torsions}. These samples required approximately $10^{11}$ energy evaluations to generate.

\paragraph{Ramachandran Jensen--Shannon Divergence (JSD).}                                                                                                    
  We evaluate the quality of generated ala2 conformations via the Jensen--Shannon divergence (JSD) of the joint backbone dihedral                                                                  
  $(\phi,\psi)$ distribution.  For both the generated and reference sample                                                         
  sets, we compute the Ramachandran angles $\phi$ and $\psi$ using
  \texttt{mdtraj} and bin them into a $200\times200$ 2D histogram over
  $[-\pi,\pi]\times[-\pi,\pi]$.  Each histogram is normalized to form a
  discrete probability distribution:
  \begin{equation}
      \hat{p}_{ij} = \frac{h_{ij}}{\sum_{i,j} h_{ij}},
  \end{equation}
  where $h_{ij}$ denotes the count in bin $(i,j)$.  The JSD between the
  generated distribution $\hat{p}$ and the reference distribution $\hat{q}$
  is then
  \begin{equation}
      \mathrm{JSD}(\hat{p}\,\|\,\hat{q})
      = \frac{1}{2}\,D_{\mathrm{KL}}(\hat{p}\,\|\,m)
      + \frac{1}{2}\,D_{\mathrm{KL}}(\hat{q}\,\|\,m),
      \qquad
      m = \tfrac{1}{2}(\hat{p}+\hat{q}),
  \end{equation}
  where $D_{\mathrm{KL}}(p\,\|\,q) = \sum_{i,j} p_{ij}\log\frac{p_{ij}}{q_{ij}}$ is the Kullback--Leibler divergence.
  
\subsection{Interatomic Distance and Energy Histograms}
In \cref{fig:ala2_hist}, we provide additional qualitative comparisons on Ala2 by comparing the energy and interatomic distance distributions of generated samples against MD reference data.
\begin{figure}
    \centering
    \includegraphics[width=0.8\linewidth]{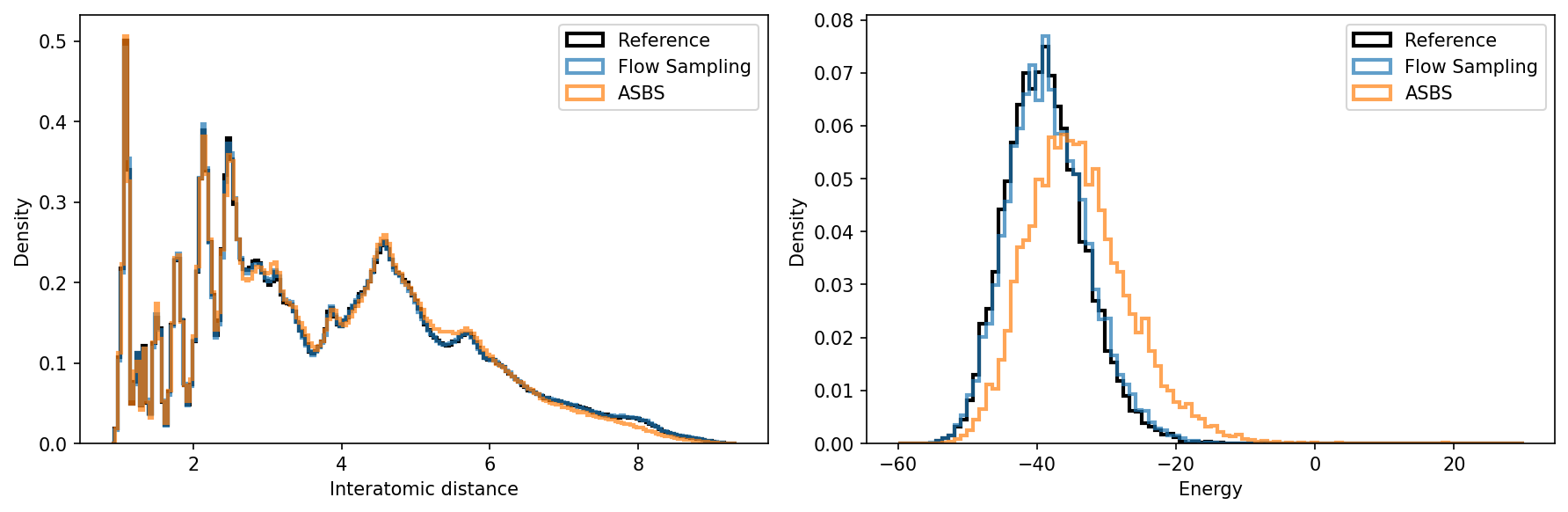}
    \caption{Ala2 interatomic distance and energy histogram over $10^4$ samples evaluated against the MD data as reference.}
    \label{fig:ala2_hist}
\end{figure}
\section{Conformer Generation Experiment Details}\label{app:conformer_details}
\subsection{Coverage Recall and Precision Metrics}\label{app:rmsd_metrics}
\begin{table}
\centering
\caption{
Hyperparameters for the alanine peptide experiments. Model and training
hyperparameters used for alanine dipeptide (Ala2) and alanine tetrapeptide
(Ala4).
}
\label{tab:ala_hparams}
\small
\setlength{\tabcolsep}{10pt}
\begin{tabular}{lcc}
\toprule
\textbf{Hyperparameter} & \textbf{Ala2} & \textbf{Ala4} \\
\midrule
\multicolumn{3}{l}{\textit{Model}} \\
Hidden dimension & 128 & 128 \\
Radial basis & 64 & 64 \\
Control layers & 4 & 4 \\
Corrector layers & 3 & 3 \\
Radial cutoff [\AA] & 8.0 & 8.0 \\
\midrule
\multicolumn{3}{l}{\textit{Base SDE}} \\
Harmonic Source $\sigma$ & $2 \text{\AA}$ & $2 \text{\AA}$\\
Noise scaling $\gamma$ & Adaptive, Eq.~\eqref{eq:gamma_adapt} ($c=1$) & Adaptive, Eq.~\eqref{eq:gamma_adapt} ($c=1$) \\
\midrule
\multicolumn{3}{l}{\textit{Replay buffer}} \\
Buffer size & 16{,}384 &  16{,}384 \\
Buffer samples per epoch & 2{,}048 & 2{,}048 \\
Batch size & 2{,}048 & 2{,}048 \\
\midrule
\multicolumn{3}{l}{\textit{Training}} \\
Epochs & 10{,}000 & 10{,}000\\
Iterations per Epoch & 100 & 100\\
Initial learning rate & \(1\times 10^{-4}\) & \(1\times 10^{-4}\) \\
Final learning rate & \(3\times 10^{-5}\) & \(5\times 10^{-5}\) \\
Reward clipping value & 1000.0 & 100.0 \\
\bottomrule
\end{tabular}
\end{table}
\textbf{Root Mean Square Deviation (RMSD)} Similarly to \citet{ganea2021geomol} and \citet{jing2022torsional}, we measure the so-called Average Minimum RMSD (AMR) and Coverage (COV) for Precision (P) and Recall (R). When measuring these metrics, we  generate twice as many conformers as provided by CREST reference conformers. For $K = 2L$ let $\{C^*_l\}_{l \in [1,L]}$ and $\{C_k\}_{k \in [1,K]}$ be the sets of ground truth and generated conformers respectively. In our evaluations, we use $L = \text{max}(L', 128)$, where $L'$ is the number of reference conformers given by CREST, taking the lowest energy conformers as a subset. The RMSD metric finds the best average distance between atoms of molecule with respect to the reference molecule, taking into account all possible symmetries.

\textbf{Coverage Recall}
\begin{align}
    \text{COV-R}(\delta) &:= \frac{1}{L} \left| \left\{ l \in \{1,\ldots,L\} : \exists k \in \{1,\ldots, K\}, \quad \text{RMSD}(C_k, C^*_l) < \delta \right\} \right| \\
    \text{AMR-R} &:= \frac{1}{L} \sum_{l \in \{1,\ldots, L\}} \min_{k \in \{1,\ldots, K\}} \text{RMSD}(C_k, C^*_l)
\end{align}
\textbf{Coverage Precision}
\begin{align}
    \text{COV-P}(\delta) &:= \frac{1}{K} \left| \left\{ k \in \{1,\ldots,K\} : \exists l \in \{1,\ldots, L\}, \quad \text{RMSD}(C_k, C^*_l) < \delta \right\} \right| \\
    \text{AMR-P} &:= \frac{1}{K} \sum_{k \in \{1,\ldots, K\}} \min_{l \in \{1,\ldots, L\}} \text{RMSD}(C_k, C^*_l)
\end{align}
where $\delta > 0$ is the coverage threshold.

\subsection{Hyperparameters and Architecture Details for SPICE and GEOM-DRUGS}

We use the existing hyperparameter settings for AS and ASBS. Below we specify the hyperparameters specific to Flow Sampling, noting that all EGNN architectures are kept the same for all methods.

\textbf{Flow Sampling: }
We use an Equivariant Graph Neural Network (EGNN, \citealt{satorras2021n}) with 12 layers and a hidden feature dimension of 128. The model is trained for $5000$ outer-loop iterations, sampling $100$ batches each iteration from the replay buffer. Each GPU maintains its own buffer with max size of 64000 samples and at each iteration we generate $128 \times 8$ new molecules and energy evaluations for the buffer across 8 GPUs. The model is trained with a batch size of 64 per GPU and uses a constant gamma schedule $\gamma_t \equiv 0.1$. Weight gradient clipping of $10^{20}$ is applied. Additionally we use the temperature $\tau=5\times 10^{-3}$ and a regularization constant $\alpha=100.0$ for the regularized energy function. The regression target, the temperature-scaled gradient of the energy function is clipped at $\ell^2$ norm 150.

\section{Connection to Adjoint Sampling via Control Variates}\label{app:as_connection}

In this section we relate the Adjoint Sampling~\cite{AS} target to the target obtained
from Flow Sampling when the supervising path is defined using the same
Brownian bridge pair law. The comparison is made at the target endpoint
law. In particular, we take \(X_1\sim q\), construct the Brownian bridge
interpolant used by Adjoint Sampling, and apply Flow Sampling to the
conditional path induced by this interpolant. The resulting Flow Sampling
target differs from the Adjoint Sampling target by a samplewise control
variate whose conditional expectation given \(X_t\) is zero.

Adjoint Sampling considers the controlled SDE
\begin{align}
    \diff X_t
    =
    \sigma(t)u(X_t,t)\diff t+\sigma(t)\diff B_t,
    \qquad
    X_0=0 .
    \label{eq:as_controlled_sde}
\end{align}
The base process is obtained by setting \(u\equiv 0\), with marginals
\begin{align}
    p_t^{\mathrm{base}}
    =
    \mathcal N(0,\nu_t I),
    \qquad
    \nu_t
    :=
    \int_0^t \sigma(s)^2\diff s .
\end{align}
For a target \(q(x)\propto e^{r(x)}\), the terminal cost used by Adjoint Sampling, corresponding the optimal control formulation, is
\begin{align}
    g(x)
    =
    \log p_1^{\mathrm{base}}(x)-r(x).
    \label{eq:as_terminal_cost}
\end{align}
In this zero-base-drift sampling setting, the
Adjoint Sampling regression target for the control is
\begin{align}
    -\sigma(t)\nabla g(X_1).
    \label{eq:as_control_target}
\end{align}

\textbf{Supervising Process} \ \
We now define the Flow Sampling supervising path using the same Brownian
bridge pair law. Given \(X_1\sim q\), let
\begin{align}
    X_t
    =
    \lambda_t X_1+s_t\varepsilon,
    \qquad
    \lambda_t:=\frac{\nu_t}{\nu_1},
    \qquad
    s_t^2:=\frac{\nu_t(\nu_1-\nu_t)}{\nu_1},
    \qquad
    \varepsilon\sim\mathcal N(0,I).
    \label{eq:as_bridge_interpolant}
\end{align}
Conditioned on \(\varepsilon\), the map \(X_1\mapsto X_t\) is
deterministic and invertible for \(t\in(0,1)\), and therefore pushes
\(q\) forward to the conditional path
\begin{align}
    p_{t|\varepsilon}(x|\varepsilon)
    =
    \frac{1}{\lambda_t^{d}}
    q\!\left(
        \frac{x-s_t\varepsilon}{\lambda_t}
    \right).
    \label{eq:as_bridge_conditional_path}
\end{align}
Applying Flow Sampling to this conditional path gives a control target
\(u^{\mathrm{FS}}_{t|\varepsilon}\). The result below shows that this
target has the same conditional regression signal as the Adjoint Sampling
target in~\eqref{eq:as_control_target}.

\begin{theorem}[Flow Sampling recovers the Adjoint Sampling target]
Let \(X_t=\lambda_t X_1+s_t\varepsilon\) be the Brownian bridge
interpolant in~\eqref{eq:as_bridge_interpolant}, with \(X_1\sim q\), and
let \(u^{\mathrm{FS}}_{t|\varepsilon}(X_t|\varepsilon)\) be the Flow
Sampling control target associated with the conditional path
\eqref{eq:as_bridge_conditional_path}. Then, for every \(t\in(0,1)\),
\begin{align}
    \E\left[
        u^{\mathrm{FS}}_{t|\varepsilon}(X_t|\varepsilon)
        \,\middle|\,
        X_t=x
    \right]
    =
    \E\left[
        -\sigma(t)\nabla g(X_1)
        \,\middle|\,
        X_t=x
    \right].
    \label{eq:fs_recovers_as_target}
\end{align}
Consequently, the two samplewise targets define the same least-squares
regression target as functions of \(X_t\).
\end{theorem}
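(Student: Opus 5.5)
The plan is to compute both sides of \eqref{eq:fs_recovers_as_target} explicitly as functions of two conditional quantities: $x$ itself and the denoiser $m(x):=\E[X_1\mid X_t=x]$. The argument then rests on a single score identity, which plays the role of the control variate.

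\textbf{Step 1: the Flow Sampling target.} Conditioned on $\varepsilon$, the interpolant $X_t=\lambda_tX_1+s_t\varepsilon$ has the same form as \eqref{eq:x_cond_fs}. It has "noise" coefficient $s_t$ multiplying the fixed $\varepsilon$ and "signal" coefficient $\lambda_t$ multiplying $X_1\sim q$. Repeating the argument of Proposition~\ref{prop:drift} with $g_t=\sigma(t)$ gives the conditional drift
\[
\dot\lambda_tX_1+\dot s_t\varepsilon+\frac{\sigma(t)^2}{2\lambda_t}\nabla r(X_1).
\]
The control target is this drift divided by $\sigma(t)$, matching the parametrization $\sigma(t)u$ in \eqref{eq:as_controlled_sde}. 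I would then substitute the following:
\begin{itemize}
\item $\dot\lambda_t=\sigma^2/\nu_1$;
\item $\dot s_t=\sigma^2(\nu_1-2\nu_t)/(2\nu_1 s_t)$, obtained by differentiating $s_t^2$;
\item $\varepsilon=(X_t-\lambda_tX_1)/s_t$.
\end{itemize}
After these substitutions the samplewise FS target is a function of $(X_t,X_1)$ only.

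\textbf{Step 2: the Adjoint Sampling target.} From \eqref{eq:as_terminal_cost} and $p^{\mathrm{base}}_1=\mathcal N(0,\nu_1I)$,
\[
-\sigma\nabla g(X_1)=\sigma\bigl(\nabla r(X_1)+X_1/\nu_1\bigr).
\]

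\textbf{Step 3: the key identity.} Both targets involve $\nabla r(X_1)$ linearly. Its conditional expectation is linked to $m(x)$ by computing the marginal score $\nabla\log p_t$ in two ways.

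First, average the $\varepsilon$-conditional score $\nabla\log p_{t|\varepsilon}(X_t|\varepsilon)=\lambda_t^{-1}\nabla r(X_1)$, which follows from \eqref{eq:as_bridge_conditional_path}, via the marginalization trick. This gives
\[
\nabla\log p_t(x)=\lambda_t^{-1}\E[\nabla r(X_1)\mid X_t=x].
\]
Second, use Tweedie's formula for the Gaussian noise conditioned on $X_1$:
\[
\nabla\log p_t(x)=-s_t^{-2}\bigl(x-\lambda_t m(x)\bigr).
\]
Equating the two yields
\[
\E[\nabla r(X_1)\mid x]=-\bigl(x-\lambda_t m(x)\bigr)/(\nu_1-\nu_t).
\]
Equivalently, $\nabla r(X_1)+\lambda_t(X_t-\lambda_tX_1)/s_t^2$ is a mean-zero control variate given $X_t$. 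Justifying the interchange of differentiation and conditional expectation requires mild regularity of $q$ (integrable score, smooth density). I expect this to be the main technical point, though a standard one.

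\textbf{Step 4: comparing coefficients.} Take conditional expectations of both targets, using Step 3 and writing $a=\nu_t$, $b=\nu_1$.

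In the FS drift, the $\dot s_t\varepsilon$ term and the score term combine:
\[
\frac{\sigma^2(b-2a)}{2a(b-a)}-\frac{\sigma^2 b}{2a(b-a)}=-\frac{\sigma^2}{b-a}.
\]
Both terms multiply $(x-\lambda_t m)$, so the conditional FS drift is
\[
\frac{\sigma^2}{b}m-\frac{\sigma^2}{b-a}\bigl(x-\lambda_t m\bigr).
\]
The conditional AS target times $\sigma$ is
\[
\sigma^2\Bigl(-\frac{x-\lambda_t m}{b-a}+\frac{m}{b}\Bigr),
\]
which is the same expression. Dividing by $\sigma$ gives \eqref{eq:fs_recovers_as_target}.

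The least-squares claim then follows, since the $L^2$ regression minimizer over functions of $X_t$ is the conditional expectation. The one thing to double-check carefully is the scaling convention in Step 1: the control is the drift divided by $\sigma(t)$, with $g_t=\sigma(t)$.
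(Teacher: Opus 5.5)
Your proposal is correct and follows essentially the same route as the paper. Both proofs compute the two samplewise targets explicitly and reduce the claim to the identity $\E\bigl[\nabla r(X_1)+\frac{\lambda_t}{s_t^2}(X_t-\lambda_tX_1)\mid X_t=x\bigr]=0$; the paper obtains it from Stein's identity on the posterior $p(x_1\mid X_t=x)\propto q(x_1)e^{-\|x-\lambda_tx_1\|^2/(2s_t^2)}$ rather than by equating the target-score and Tweedie forms of $\nabla\log p_t$, and it subtracts the targets to isolate the control variate $\frac{\sigma(t)(1-2\lambda_t)}{2}\bigl[\varepsilon/s_t+\nabla r(X_1)/\lambda_t\bigr]$ instead of writing both conditional means via $m(x)$, while your coefficient algebra (e.g.\ $\lambda_t/s_t^2=1/(\nu_1-\nu_t)$) checks out.
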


\begin{proof}
We first compute the Flow Sampling target for the conditional path
\eqref{eq:as_bridge_conditional_path}. Along the interpolant,
\begin{align}
    v_{t|\varepsilon}(X_t|\varepsilon)
    =
    \dot\lambda_tX_1+\dot s_t\varepsilon, \qquad
    \nabla\log p_{t|\varepsilon}(X_t|\varepsilon)
    =
    \frac{1}{\lambda_t}\nabla r(X_1).
\end{align}
Using
\begin{align}
    \dot\lambda_t
    =
    \frac{\sigma(t)^2}{\nu_1},
    \qquad
    \dot s_t
    =
    \frac{\sigma(t)^2(1-2\lambda_t)}{2s_t},
\end{align}
the Flow Sampling physical drift target is
\begin{align}
    b^{\mathrm{FS}}_{t|\varepsilon}(X_t|\varepsilon)
    &=
    v_{t|\varepsilon}(X_t|\varepsilon)
    +
    \frac{\sigma(t)^2}{2}
    \nabla\log p_{t|\varepsilon}(X_t|\varepsilon) \\
    &=
    \sigma(t)^2
    \left[
        \frac{X_1}{\nu_1}
        +
        \frac{1-2\lambda_t}{2s_t}\varepsilon
        +
        \frac{1}{2\lambda_t}\nabla r(X_1)
    \right].
\end{align}
Since the controlled SDE~\eqref{eq:as_controlled_sde} is parameterized by
the physical drift \(\sigma(t)u(X_t,t)\), we compare targets at the level
of the control:
\begin{align}
    u^{\mathrm{FS}}_{t|\varepsilon}(X_t|\varepsilon)
    &:=
    \frac{1}{\sigma(t)}
    b^{\mathrm{FS}}_{t|\varepsilon}(X_t|\varepsilon) \\
    &=
    \sigma(t)
    \left[
        \frac{X_1}{\nu_1}
        +
        \frac{1-2\lambda_t}{2s_t}\varepsilon
        +
        \frac{1}{2\lambda_t}\nabla r(X_1)
    \right].
    \label{eq:fs_bridge_control_target}
\end{align}

We now compare this with the Adjoint Sampling target. Since
\(p_1^{\mathrm{base}}=\mathcal N(0,\nu_1 I)\), the terminal cost
\eqref{eq:as_terminal_cost} satisfies
\begin{align}
    -\sigma(t)\nabla g(X_1)
    &=
    -\sigma(t)
    \nabla
    \left(
        \log p_1^{\mathrm{base}}(X_1)-r(X_1)
    \right) \\
    &=
    \sigma(t)
    \left[
        \frac{X_1}{\nu_1}
        +
        \nabla r(X_1)
    \right].
    \label{eq:as_target_expanded}
\end{align}
Subtracting~\eqref{eq:as_target_expanded} from
\eqref{eq:fs_bridge_control_target} gives
\begin{align}
    u^{\mathrm{FS}}_{t|\varepsilon}(X_t|\varepsilon)
    -
    \bigl(-\sigma(t)\nabla g(X_1)\bigr)
    &=
    \frac{\sigma(t)(1-2\lambda_t)}{2}
    \left[
        \frac{\varepsilon}{s_t}
        +
        \frac{1}{\lambda_t}\nabla r(X_1)
    \right].
    \label{eq:fs_as_control_variate}
\end{align}
Thus the two targets differ pointwise by the term on the right-hand side.
We now show that this term vanishes after conditioning on \(X_t\).

Fix \(X_t=x\). The conditional density of $X_1\sim q$ given $X_t=x$ under the Brownian bridge law is given by,
\begin{align}
    p(X_1=x_1\mid X_t=x)
    &\propto
    q(x_1)
    \exp\!\left(
        -\frac{\norm{x-\lambda_t x_1}^2}{2s_t^2}
    \right).
    \label{eq:posterior_x1_given_xt_bridge}
\end{align}
Moreover, under this conditional law,
\begin{align}
    \varepsilon
    =
    \frac{x-\lambda_tX_1}{s_t}.
\end{align}
Applying Stein's identity~\cite{hyvarinen2005estimation} to
\eqref{eq:posterior_x1_given_xt_bridge} gives
\begin{align}
    0
    &=
    \E\left[
        \nabla_{X_1}\log p(X_1\mid X_t=x)
        \,\middle|\,
        X_t=x
    \right] \\
    &=
    \E\left[
        \nabla r(X_1)
        +
        \frac{\lambda_t}{s_t^2}
        (x-\lambda_tX_1)
        \,\middle|\,
        X_t=x
    \right] \\
    &=
    \E\left[
        \nabla r(X_1)
        +
        \frac{\lambda_t}{s_t}\varepsilon
        \,\middle|\,
        X_t=x
    \right].
\end{align}
Dividing by \(\lambda_t\), we obtain
\begin{align}
    \E\left[
        \frac{\varepsilon}{s_t}
        +
        \frac{1}{\lambda_t}\nabla r(X_1)
        \,\middle|\,
        X_t=x
    \right]
    =
    0.
    \label{eq:fs_as_zero_mean_cv}
\end{align}
Combining~\eqref{eq:fs_as_control_variate} with
\eqref{eq:fs_as_zero_mean_cv} proves
\eqref{eq:fs_recovers_as_target}.

Finally, least-squares regression onto functions of \(X_t\) depends only
on the conditional expectation of the samplewise target. Therefore
\eqref{eq:fs_recovers_as_target} implies that the Flow Sampling target
and the Adjoint Sampling target define the same regression target as
functions of \(X_t\).
\end{proof}

This result is a target-law comparison. In the practical fixed-point
training procedures used by both methods, the endpoint samples are obtained
from the current detached sampler and are reused through a replay buffer.
The calculation above explains why, when the endpoint law has reached the
target distribution, the Brownian bridge Flow Sampling target reduces to
the same conditional regression signal as Adjoint Sampling, with the
remaining sample-wise difference acting as a zero-mean control variate.

\section{Proofs}
\label{a:proofs}
\FlowToDiffusion*
\begin{proof}
      $u_t$~(\ref{eq:v_to_u}) generate a diffusion process with marginal $p_t$, if they satisfy the Fokker–Planck equation~(\ref{eq:fokker_planck}). Indeed,
    \begin{align*}
        \frac{\partial}{\partial t}&p_{t}(x) + \nabla\brac{p_{t}(x) u_{t}(x)}\\
        &=\underbrace{\frac{\partial}{\partial t}p_{t}(x) + \nabla\brac{p_t(x)v_{t}(x)}}_{=0 \text{, by continuity equation.} }+ \frac{g_t^2}{2}\nabla\brac{p_t(x)\nabla\log p_t(x)}\\
    % &= \frac{\partial}{\partial t}p_{t}(x) + \nabla\brac{p_t(x)v_{t}(x)} + \frac{g_t^2}{2}\nabla\brac{p_t(x)\nabla\log p_t(x)}\\
    &= \frac{g_t^2}{2}\nabla^2p_{t}(x),
\end{align*}
where in the first equality we substitute $u_t$~, and in the second equality we used the assumption that $v_t$ generate $p_t$, \ie, they satisfy the continuity equation~(\ref{eq:continuity}).
\end{proof}

\Drift*
\begin{proof}[Proof of Proposition~\ref{prop:drift}.]
    By definition, the drift~(\ref{eq:cond_drift}) $u_{t|0}$ at $X_t$ the interpolant~(\ref{eq:x_cond_fs}) is
    \begin{equation}
        u_{t|0}\parr{X_t|x_0} = v_{t|0}\parr{X_t|x_0} + \frac{g_t^2}{2}\nabla\log p_{t|0}\parr{X_t}.
    \end{equation}
    We treat each term separately. The first term was also shown by \citet{lipman2023flow}, since the $X_t$ the interpolant~(\ref{eq:x_cond_fs}) is the solution to the flow defined by $v_{t|0}$ the conditional velocity~(\ref{eq:v_cond_fs}), we have
    \begin{align}
        v_{t|0}\parr{X_t|x_0} &= \frac{d}{dt}X_t\\
        &= \frac{d}{dt}\parr{\alpha_tX_1 +\sigma_tx_0}\\
        &= \dot{\alpha}_tX_1  + \dot{\sigma}_tx_0,
    \end{align}
    where the first equality is by definition~(\ref{e:flow_process}), and the second equality we substitute $X_t$ as in equation~(\ref{eq:x_cond_fs}).

    Second, using the definition of the conditional probability path~(\ref{eq:p_cond_fs}) $p_{t|0}$, we have
    \begin{align}
        \nabla\log p_{t|0}\parr{X_t|x_0} &= \nabla\parr{\log q\parr{\frac{x-\sigma_tx_0}{\alpha_t}} -d\log\parr{\alpha_t}}\Big\vert_{x=X_t}\\
        &= \nabla\parr{r\parr{\frac{x-\sigma_tx_0}{\alpha_t}} -\log\parr{Z} -d\log\parr{\alpha_t}}\Big\vert_{x=X_t}\\
        &= \frac{1}{\alpha_t}\nabla r\parr{\frac{X_t -\sigma_tx_0}{\alpha_t}}\\
        &= \frac{1}{\alpha_t}\nabla r\parr{X_1},
    \end{align}
    where in the second equality we substitute the definition for $q(x)=\frac{e^{r(x)}}{Z}$, and in the last equality we substitute $X_t$ as in equation~(\ref{eq:x_cond_fs}).
\end{proof}

\JClosedFormProp*
\begin{proof}[Proof of Proposition~\ref{prop:J_closed_form}]
    Our proof closely follows results about Jacobi fields from the book Introduction to Riemannian Manifolds~\citep{lee2018introduction}.

    Denote the geodesic starting at $X_1$ and ending at $x_0$ by
    \begin{equation}
        \gamma(s)\in\gM,\forall s\in[0,1]\quad \gamma(0) = X_1,\quad    \gamma(1) = x_0.
    \end{equation}
    The velocity of the geodesic is defined as $\gamma'(s)=\frac{d}{ds}\gamma(s)$, and for every $s\in[0,1]$, $\gamma'(s)\in T_{\gamma(s)}\gM$.
    Indeed
    \begin{align}
        \inner{\gamma'(s)}{\gamma(s)}_{\Sigma} & = \inner{\frac{d}{ds}\gamma(s)}{\gamma(s)}_{\Sigma}\\
        &= \frac{1}{2}\parr{\inner{\frac{d}{ds}\gamma(s)}{\gamma(s)}_{\Sigma} + \inner{\gamma(s)}{\frac{d}{ds}\gamma(s)}_{\Sigma}}\\
        & = \frac{1}{2}\frac{d}{ds}\inner{\gamma(s)}{\gamma(s)}_{\Sigma} = 0.
    \end{align}
    The covariant derivative $D_s$ on our embedded manifold can be understood as $D_sv = P^{\perp}_{\gamma(s)}\frac{d}{ds}v(s)$ for any field $v(s)\in T_{\gamma(s)}\gM$ along the geodesic $\gamma$. A field $v(s)\in T_{\gamma(s)}\gM$ is said to be parallel along $\gamma(s)$ if $D_sv\equiv 0$. By definition of the geodesic, the velocity $\gamma'(s)$ is parallel along $\gamma(s)$, \ie,
    \begin{equation}
        D_s\gamma'\equiv 0
    \end{equation}
    Every parallel field $v(s)$ along $\gamma(s)$ given by an isometric linear map $T_{\gamma(0)\too\gamma(s)}:T_{\gamma(0)}\gM\too T_{\gamma(s)}\gM$ called the \emph{parallel transport} along $\gamma$ and an initial condition,
    \begin{equation}
        v(s) = T_{\gamma(0)\too\gamma(s)}v(0).
    \end{equation}
    For our set of manifolds~(\ref{eq:manifold}), for any $X_1\in\gM$ and $x_0\in\gM$ with a unique minimizing geodesic,
    \begin{equation}
        \gamma(s) = \exp_{X_1}\parr{sV_1}, \quad V_1=\log_{X_1}\parr{x_0},
    \end{equation}
    where $\exp_{X_1}:T_{X_1}\too\gM$ and $\log_{X_1}:\gM\too T_{X_1}$ are the exponential and logarithmic map of the manifold. By definition of the exponential map
    \begin{equation}
        \gamma'(0) = V_1
    \end{equation}
    Note, both the basis of $\exp_{X_1}$ and $\log_{X_1}$ depends on $X_1$, hence the Jacobian of geodesic requires the total variation w.r.t. $X_1$.
    \begin{equation}
    D_{X_1}\gamma(s) = \underbrace{ (d_x \exp_{X_1})_{sV_1}}_{\text{Base-point variation}} + \underbrace{ (d_v \exp_{X_1})_{sV_1}}_{\text{Argument variation}} \circ \parr{ s D_{X_1}\log_{X_1}(x_0)}
    \end{equation}

     To compute the variations of the exponent map, we use tools of Jacobi fields. A field $j(s)$ along $\gamma(s)$ is said to be a Jacobi field if it satisfies the Jacobi equation. In constant curvature manifolds, the Jacobi equation takes the simple form
    \begin{equation}\label{eq:jacobi_equation}
        D^2_sj(s) +\kappa\brac{\inner{\gamma'(s)}{\gamma'(s)}_{\Sigma}j(s) - \inner{\gamma'(s)}{j(s)}_{\Sigma}\gamma'(s)} = 0,
    \end{equation}
    Proposition 10.1~\citep{lee2018introduction} states that every variation through the geodesic is a Jacobi field. That is, for every vector $v_1\in T_{X_1}\gM$, both
    \begin{equation}
        j_x(s) = (d_x \exp_{X_1})_{sV_1}\cdot v_1,\quad\text{and}\quad j_v(s)=(d_v \exp_{X_1})_{sV_1}\cdot v_1
    \end{equation} are Jacobi fields, where "$\cdot$" denotes the action of the variation on $v_1$.

    Additionally, Proposition 10.2~\citep{lee2018introduction} states that for every $s_0\in[0,1]$ and pair of vectors $v,w\in T_{\gamma(s_0)}\gM$ there exists a unique Jacobi field $j(s)$ along gamma such that
    \begin{equation}
        j(s_0) = v \quad D_sj(s_0) = w.
    \end{equation}
    We use the uniqueness to compute the variations $(d_x \exp_{X_1})_{sV_1}$ and $(d_v \exp_{X_1})_{sV_1}$. That is, we have that at $s=0$, for every $v_1\in T_{X_1}\gM$,
    \begin{equation}\label{eq:jacobi_boundary_conditions}
        (d_x \exp_{X_1})_{0}\cdot v_1 = v_1,\quad D_s(d_x \exp_{X_1})_{0}\cdot v_1 = 0,\quad\text{and}\quad (d_v \exp_{X_1})_{0}\cdot v_1 = 0,\quad D_s(d_v \exp_{X_1})_{0}\cdot v_1 = v_1.
    \end{equation}
    Hence by uniqueness it is suffice to find two Jacobi fields $j_x(s)$ and $j_v(s)$ along $\gamma(s)$ that satisfy these boundary conditions.

    %%% tangential and normal fields
    A field $v^{\parallel}(s)\in T_{\gamma(s)}\gM$ is called \emph{tangential field} if $v^{\parallel}(s)\in\text{span}\set{\gamma'(s)}$ for every $s\in[0,1]$, and a field $v^{\perp}(s)\in T_{\gamma(s)}\gM$ is called \emph{normal field} if $\inner{v(s)}{\gamma'(s)}_{\Sigma}=0$ for every $s\in[0,1]$. We show that the Jacobian can be decomposed to a tangential field and a normal field. Importantly, the Jacobi equation~(\ref{eq:jacobi_equation}) takes a simpler form in case $j^{\parallel}(s)$ is a tangential field or $j^{\perp}$ is a normal field. Indeed
    \begin{equation}
        D^2_sj^{\parallel}(s) = 0,\quad\text{and}\quad D_s^2j^{\perp}(s) + \kappa \inner{\gamma'(s)}{\gamma'(s)}_{\Sigma}j^{\perp}(s)=0.
    \end{equation}
    Additionally, since $\gamma'(s)$ is parallel along $\gamma$,
    \begin{align}
        \inner{\gamma'(s)}{\gamma'(s)}_{\Sigma} &= \inner{T_{\gamma(0)\too\gamma(s)}\gamma'(0)}{T_{\gamma(0)\too\gamma(s)}\gamma'(0)}_{\Sigma}\\
        &= \inner{\gamma'(0)}{\gamma'(0)}_{\Sigma}\\
        &= \inner{V_1}{V_1}_{\Sigma}\\
        &= \omega_1^2,
    \end{align}
    where in the second equality used that the parallel transport $T_{\gamma(0)\too\gamma(s)}$ is an isometry. Thus the Jacobi equation~(\ref{eq:jacobi_equation}) further simplify to
    \begin{equation}\label{eq:jacobi_equation_decomposed}
        D^2_sj^{\parallel}(s) = 0,\quad\text{and}\quad D_s^2j^{\perp}(s) + \kappa w_1^2j^{\perp}(s)=0.
    \end{equation}
    The simplest way to generate fields that conserve orientation along $\gamma(s)$, \ie, tangential or normal are using the parallel transport. Since $T_{X_1\too\gamma(s)}$ is an isometry if $v_1^{\parallel}, v_1^{\perp} \in T_{X_1}\gM$ are tangential and normal vectors (reps.), \ie, $v_1^{\parallel}\in\text{span}\set{V_1}$ and $\inner{v_1^{\perp}}{V_1}_{\Sigma}=0$, then $v_1^{\parallel}$ and $v_1^{\perp}$ give rise to a tangential and normal (resp.) parallel field along $\gamma(s)$ defined as
    \begin{equation}
        v^{\parallel}(s) = T_{X_1\too\gamma(s)}v_1^\parallel,\quad\text{and}\quad v^{\perp}(s) = T_{X_1\too\gamma(s)}v_1^\perp.
    \end{equation}
    Indeed,
    \begin{equation}
        v^{\parallel}(s) = T_{X_1\too\gamma(s)}v_1^\parallel \in\text{span}\set{T_{X_1\too\gamma(s)}V_1} = \text{span}\set{\gamma'(s)},
    \end{equation}
    and
    \begin{equation}
        \inner{v^{\perp}(s)}{\gamma'(s)}_{\Sigma} = \inner{T_{X_1\too\gamma(s)}v_1^\perp}{T_{X_1\too\gamma(s)}V_1}_{\Sigma} = \inner{v_1^\perp}{V_1}_{\Sigma} =0.
    \end{equation}

    First we consider the tangential direction. Let $v_1^{\parallel}\in\text{span}\set{V_1}$, and consider $j^{\parallel}_x(s)=v^{\parallel}(s)$ and $j^{\parallel}_v(s)=sv^{\parallel}(s)$, both tangential fields and since $v^{\parallel}(s)$ is parallel $D_sv^{\parallel}\equiv 0$. Thus both $j^{\parallel}_x(s)$ and $j^{\parallel}_v(s)$  satisfy the Jacobi equation for the tangential case~\ref{eq:jacobi_equation_decomposed}, and the initial conditions at $s=0$ by construction are
    \begin{equation}
        j^{\parallel}_x(0) = v_1^{\parallel},\quad D_sj^{\parallel}_x(0) = 0,\quad\text{and}\quad j^{\parallel}_v(0) = 0,\quad D_sj^{\parallel}_v(0) = v_1^{\parallel}.
    \end{equation}
    Hence uniqueness implies,
    \begin{equation}
        j^{\parallel}_x(s) = (d_x \exp_{X_1})_{sV_1}\cdot v_1^{\parallel},\quad\text{and}\quad j^{\parallel}_v(s)=(d_v \exp_{X_1})_{sV_1}\cdot v_1^{\parallel}.
    \end{equation}

    Second we consider the normal directions. Let $v_1^{\perp}\in T_{X_1}\gM$ be such that $\inner{v_1^{\perp}}{V_1}_{\Sigma}=0$. and consider $j^{\perp}(s) = c_s(X_1,x_0)v^{\perp}(s)$, where $c(X_1,x_0):[0,1]\too\R$ is some smooth scaling function. Then since $v^{\perp}(s)$ is parallel field along $\gamma(s)$, \ie, $D_sv^{\perp}\equiv0$, the normal case of the Jacobi equation~(\ref{eq:jacobi_equation_decomposed}) reduce to
    \begin{equation}
        c''_s(X_1,x_0)v^{\perp}(s) + \kappa w_1^2 c_s(X_1,x_0)v^{\perp}(s) = 0,
    \end{equation}
    which is true if and only if $c_s\parr{X_1, x_0}$ satisfies the ODE,
    \begin{equation}\label{eq:jacobi_perp_reduced}
        c''_s(X_1,x_0) + \kappa w_1^2c_s(X_1,x_0) = 0.
    \end{equation}
    The solutions to the ODE in equation~\eqref{eq:jacobi_perp_reduced} are
    \begin{equation}
        c_s(X_1,x_0) = \begin{cases}
            c_0(X_1,x_0)\cos\parr{\sqrt{\kappa}\omega_1s} + \frac{c'_0(X_1,x_0)}{\sqrt{\kappa}\omega_1}\sin\parr{\sqrt{\kappa}\omega_1s} & \textbf{ if } \kappa>0\\
            c_0(X_1,x_0)\cosh\parr{\sqrt{|\kappa|}\omega_1s} + \frac{c'_0(X_1,x_0)}{\sqrt{\kappa}\omega_1}\sinh\parr{\sqrt{|\kappa|}\omega_1s} & \textbf{ if } \kappa<0
        \end{cases},
    \end{equation}
    where $c_0(X_1,x_0)$ and $c'_0(X_1,x_0)$ are determined by the boundary conditions at $s=0$. Thus we define,
    \begin{equation}
        j_x^{\perp}(s) = c_s^{x}(X_1,x_0)v^{\perp}(s),\quad\text{and}\quad j_v^{\perp}(s) = c_s^{v}(X_1,x_0)v^{\perp},
    \end{equation}
    where
    \begin{equation}
         c_s^{x}(X_1,x_0) = \begin{cases}
            \cos\parr{\sqrt{\kappa}\omega_1s} & \textbf{ if } \kappa>0\\
            \cosh\parr{\sqrt{|\kappa|}\omega_1s}  & \textbf{ if } \kappa<0
        \end{cases},
    \end{equation}
    and
    \begin{equation}
        c^{v}_s(X_1,x_0) = \begin{cases}
             \frac{1}{\sqrt{\kappa}\omega_1}\sin\parr{\sqrt{\kappa}\omega_1s} & \textbf{ if } \kappa>0\\
             \frac{1}{\sqrt{\kappa}\omega_1}\sinh\parr{\sqrt{|\kappa|}\omega_1s} & \textbf{ if } \kappa<0.
        \end{cases}
    \end{equation}
    Then they satisfy the initial conditions
    \begin{equation}
        j^{\perp}_x(0) = v_1^\perp,\quad D_sj^{\perp}_x(0) = 0,\quad\text{and}\quad j^{\perp}_v(0) = 0,\quad D_sj^{\perp}_v(0) = v_1^\perp.
    \end{equation}
    Hence uniqueness implies,
    \begin{equation}
        j^{\perp}_x(s) = (d_x \exp_{X_1})_{sV_1}\cdot v_1^{\perp},\quad\text{and}\quad j^{\perp}_v(s)=(d_v \exp_{X_1})_{sV_1}\cdot v_1^{\perp}.
    \end{equation}
    Let $v_1\in T_{X_1}\gM$, finally we can write the full Jacobi fields for the variations of the exponent, \ie,
    \begin{equation}
        j_x(s) = (d_x \exp_{X_1})_{sV_1}\cdot v_1,\quad\text{and}\quad j_v(s)=(d_v \exp_{X_1})_{sV_1}\cdot v_1.
    \end{equation}
    Define $v_1^\parallel=P_{V_1}v_1$ and $v_1^\perp=P_{V_1}^\perp v_1$, where $P_{V_1}$ is the orthogonal projection~(\ref{eq:orthogonal_projection}) and $P_{V_1}^{\perp}$ its orthogonal compliment. Then $v_1^\parallel$ is tangential vector and $v_1^\perp$ is normal vector, thus they give rise to Jacobi fields $j_x^{\parallel}$, $j_v^{\parallel}$ which satisfy the tangential Jacobi equation~(\ref{eq:jacobi_equation_decomposed}), and $j_x^{\perp}$, $j_v^{\perp}$ which satisfy the normal Jacobi equation~(\ref{eq:jacobi_equation_decomposed}). Lastly, since the Jacobi equation~(\ref{eq:jacobi_equation}) decomposes to two independent linear differential equations (\ie, one tangential and one normal) as in equation~(\ref{eq:jacobi_equation_decomposed}), the fields
    \begin{align}
        j_x(s) &= j_x^{\parallel}(s) + j_x^{\perp}\\
        &= T_{X_1\too \gamma(s)}P_{V_1}v_1 + c_s^x\parr{X_1,x_0}T_{X_1\too \gamma(s)}P_{V_1}^{\perp}v_1\\
        &= \brac{T_{X_1\too \gamma(s)}P_{V_1} + c_s^x\parr{X_1,x_0}T_{X_1\too \gamma(s)}P_{V_1}^{\perp}}v_1
    \end{align}
    and
    \begin{align}
        j_v(s) &= j_v^{\parallel}(s) + j_v^{\perp}\\
        &= sT_{X_1\too \gamma(s)}P_{V_1}v_1 + c_s^v\parr{X_1,x_0}T_{X_1\too \gamma(s)}P_{V_1}^{\perp}v_1\\
        &= \brac{sT_{X_1\too \gamma(s)}P_{V_1} + c_s^v\parr{X_1,x_0}T_{X_1\too \gamma(s)}P_{V_1}^{\perp}}v_1,
    \end{align}
    are Jacobi fields and they satisfy boundary condition~(\ref{eq:jacobi_boundary_conditions}).

    As last step, we compute the variation of the logarithmic $D_{X_1}\log_{X_1}\parr{x_0}$. By definition
    \begin{equation}
        x_0 = \gamma(1) = \exp_{X_1}\parr{\log_{X_1}\parr{x_0}}.
    \end{equation}
    Since $x_0$ is fixed,
    \begin{align}
        0  &= D_{X_1}\gamma(1) = (d_x \exp_{X_1})_{V_1} + (d_v \exp_{X_1})_{V_1} \circ D_{X_1}\log_{X_1}(x_0)\\
        &= \brac{T_{X_1\too x_0}P_{V_1} + c_1^x\parr{X_1,x_0}T_{X_1\too x_0}P_{V_1}^{\perp}} + \brac{T_{X_1\too x_0}P_{V_1} + c_1^v\parr{X_1,x_0}T_{X_1\too x_0}P_{V_1}^{\perp}}\circ  D_{X_1}\log_{X_1}(x_0)\\
        &= T_{X_1\too x_0}\circ\brac{\parr{P_{V_1} + c_1^x\parr{X_1,x_0}P_{V_1}^{\perp}} + \parr{P_{V_1} + c_1^v\parr{X_1,x_0}P_{V_1}^{\perp}}\circ D_{X_1}\log_{X_1}(x_0)}\\
        &= T_{X_1\too x_0}\circ\brac{\parr{P_{V_1} + P_{V_1}\circ D_{X_1}\log_{X_1}(x_0)} + \parr{c_1^v\parr{X_1,x_0}P_{V_1}^{\perp} + c_1^v\parr{X_1,x_0}P_{V_1}^{\perp}\circ D_{X_1}\log_{X_1}(x_0)}}.
    \end{align}
    Since the parallel transport is invertible, this implies
    \begin{equation}
        P_{V_1}\circ D_{X_1}\log_{X_1}(x_0) = - P_{V_1},\quad\text{and}\quad P_{V_1}^{\perp}\circ D_{X_1}\log_{X_1}(x_0) = - \frac{c_1^x\parr{X_1,x_0}}{c_1^v\parr{X_1,x_0}}P_{V_1}^{\perp}
    \end{equation}

    We conclude that the Jacobian $D_{X_1}\gamma(s)$ is
    \begin{align}
        D_{X_1}\gamma(s) &= (d_x \exp_{X_1})_{sV_1} + (d_v \exp_{X_1})_{sV_1} \circ D_{X_1}\log_{X_1}(x_0)\\
        &= \brac{T_{X_1\too x_0}P_{V_1} + c_s^x\parr{X_1,x_0}T_{X_1\too\gamma(s)}P_{V_1}^{\perp}} + \brac{sT_{X_1\too\gamma(s)}P_{V_1} + c_s^v\parr{X_1,x_0}T_{X_1\too\gamma(s)}P_{V_1}^{\perp}}\circ D_{X_1}\log_{X_1}(x_0)\\
        &=T_{X_1\too\gamma(s)}\brac{P_{V_1} +sP_{V_1}\circ D_{X_1}\log_{X_1}(x_0)} + T_{X_1\too\gamma(s)}\brac{c_s^x\parr{X_1,x_0}P_{V_1}^{\perp} + c_s^v\parr{X_1,x_0}P_{V_1}^{\perp}\circ D_{X_1}\log_{X_1}(x_0)}\\
        &=\parr{1-s}T_{X_1\too\gamma(s)}P_{V_1} +\frac{c_1^v\parr{X_1,x_0}c_s^x\parr{X_1,x_0} - c_1^x\parr{X_1,x_0}c_s^v\parr{X_1,x_0}}{c_1^v\parr{X_1,x_0}}T_{X_1\too\gamma(s)}P_{V_1}^{\perp}\\
        &=\parr{1-s}T_{X_1\too\gamma(s)}P_{V_1} +c_{1-s}\parr{X_1,x_0}T_{X_1\too\gamma(s)}P_{V_1}^{\perp},
    \end{align}
    where
    \begin{equation}
        c_{1-s}(X_1, x_0) = \begin{cases}
        \dfrac{\sin((1-s)\omega_1\sqrt{\kappa})}{\sin(\omega_1\sqrt{\kappa})} & \text{if } \kappa > 0 \\[2ex]
        \dfrac{\sinh((1-s)\omega_1\sqrt{|\kappa|})}{\sinh(\omega_1\sqrt{|\kappa|})} & \text{if } \kappa < 0
    \end{cases},
    \end{equation}
    Finally, noting that the geodesic interpolant~(\ref{eq:x_cond_fs_riem}) is $X_t=\gamma(1-t)$, which implies $V_1=-\dot{X}_1$, and therefore $\text{span}\set{V_1}=\text{span}\set{\dot{X}_1}$, completes the proof.
    \end{proof}

\end{document}